\documentclass[11pt, a4paper, logo, copyright]{googledeepmind}

\usepackage[authoryear, sort&compress, round]{natbib} 
\usepackage{graphicx}      
\usepackage{subcaption}    
\usepackage{geometry}      
\usepackage{amsmath}       
\usepackage{xcolor}        
\usepackage{siunitx}  
\usepackage{float}
\usepackage{wrapfig}
\usepackage{geometry}
\usepackage{amsmath,amssymb,mathtools}
\usepackage{bbm}
\usepackage{caption}
\usepackage{dsfont}  
\usepackage{booktabs}
\usepackage{enumitem}
\usepackage{algorithm}
\usepackage{caption}
\usepackage{algpseudocode}
\usepackage{hyperref}
\usepackage{xcolor}

\newtheorem{claim}{Claim}

\newenvironment{designrationale}[1]{%
  \begin{tcolorbox}[
    colback=gray!5!white,
    colframe=black!75!black,
    fonttitle=\bfseries,
    title={#1},
    boxrule=0.5pt,
    arc=2pt,
    left=6pt,
    right=6pt,
    top=6pt,
    bottom=6pt
  ]%
}{%
  \end{tcolorbox}%
}

\hypersetup{
  colorlinks=true,
  linkcolor=black,
  citecolor=black,
  urlcolor=blue,
  pdftitle={A Subgoal-driven Framework for Improving Long-Horizon LLM Agents}
}

\usepackage{makecell}
\usepackage{enumitem}
\usepackage{threeparttable} 
\usepackage{booktabs}      
\usepackage{array}         
\usepackage{longtable}     
\usepackage{enumitem}      
\usepackage{tabularx}      

\definecolor{highlightcolor}{RGB}{235, 245, 255} 
\definecolor{ourmodelcolor}{RGB}{230, 240, 255}

\usepackage{multirow}   
\usepackage{colortbl}   
\usepackage{xcolor}     
\usepackage{caption}    
\usepackage{array}      
\usepackage{algorithm}
\usepackage{algpseudocode}

\usepackage{listings}      
\usepackage[most,skins,theorems]{tcolorbox} 

\definecolor{darkpurple}{RGB}{102, 51, 153}
\definecolor{lightpurple}{RGB}{204, 153, 255}
\definecolor{lightblue}{rgb}{0.22,0.45,0.70}
\definecolor{forestgreen}{rgb}{0.24,0.50,0.19}
\definecolor{DeepTeal}{RGB}{20, 110, 110}
\definecolor{LightTeal}{RGB}{235, 245, 245}

\newcolumntype{L}[1]{>{\raggedright\arraybackslash}p{#1}}

\tcbset{
  aibox/.style={
    width=\linewidth,
    top=8pt,
    bottom=4pt,
    colback=LightTeal,
    colframe=DeepTeal,
    colbacktitle=DeepTeal,
    fonttitle=\bfseries,
    enhanced,
    center,
    attach boxed title to top left={yshift=-0.1in,xshift=0.15in},
    boxed title style={boxrule=0pt,colframe=white,},
  }
}
\newtcolorbox{AIbox}[2][]{aibox,title=#2,#1}

\title{A Subgoal-driven Framework for Improving Long-Horizon LLM Agents}
\correspondingauthor{taiyiw@google.com}

\author[1]{Taiyi Wang}
\author[1]{Sian Gooding}
\author[1]{Florian Hartmann}
\author[1]{Oriana Riva}
\author[1]{Edward Grefenstette}

\affil[1]{Google DeepMind}

\begin{abstract}
Large language model (LLM)-based agents have emerged as powerful autonomous controllers for digital environments, spanning mobile interfaces, operating systems, and web browsers. Web navigation, for example, demands handling dynamic content and long action sequences, making it a particularly complex task. Existing LLM-backed agents exhibit weakened long-horizon planning abilities on two fronts. During online execution, agents often lose track as new information arrives, lacking a clear and adaptive path toward the final goal. This issue is further compounded during RL fine-tuning, where sparse and delayed rewards make it difficult for agents to identify the actions that lead to success, preventing them from sustaining coherent reasoning over extended tasks. We address this with two contributions: (1) An agent framework leveraging proprietary models for online planning via subgoal decomposition; and (2) MiRA (\underline{Mi}lestoning your \underline{R}einforcement Learning Enhanced \underline{A}gent), an RL training framework using dense, milestone-based reward signals. Real-time planning mechanism enhances proprietary models like Gemini by $\approx$10\% absolute success rate (SR) on the WebArena-Lite benchmark. Meanwhile, applying MiRA to the open \textbf{Gemma3-12B} model boosts its success rate from $6.4\%$ to $43.0\%$. This performance surpasses proprietary systems such as GPT-4-Turbo ($17.6\%$) and GPT-4o ($13.9\%$), as well as the previous open-model state of the art, \textbf{WebRL} ($38.4\%$). Our findings demonstrate that combining explicit inference-time planning with milestone-based rewards significantly boosts an agent's long-horizon abilities, paving the way for more robust, general-purpose autonomous systems.

\end{abstract}

\begin{document}

\maketitle


\section{Introduction}

\begin{figure}[htbp]
    \centering
    \includegraphics[width=1.0\linewidth]{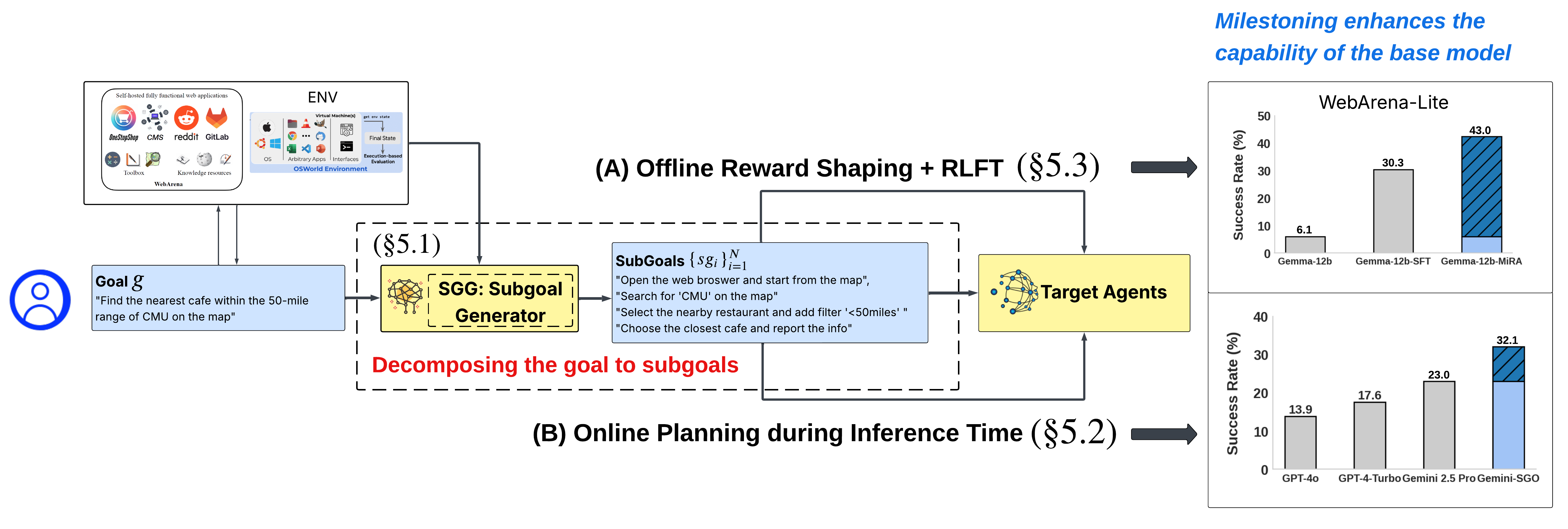}
    
    \captionsetup{justification=centering}
    
    \caption{Overview of the Milestoning the Agents}
    \label{fig:Intro}
\end{figure}


Large language model (LLM)-based agents have gained significant traction as autonomous interfaces for navigating and interacting with real-world digital environments. These systems span a broad spectrum of modalities, ranging from mobile device automation~\citep{yang2023appagent,wang2024mobile,christianos2024lightweight,papoudakis2025appvlm,li2024effects,rawles2024androidinthewild,rawles2024androidworld} and operating system (OS) control~\citep{xie2024osworld,hu2025agents,zhang2025ufo} to complex web navigation~\citep{wang2024distrl,qi2024webrl,gu2025mobile,deng2023mind2web}. Among these, \textbf{web navigation} serves as a particularly rigorous testbed for evaluating an agent's reasoning capabilities due to the inherent complexity of the environment. The difficulty of this domain is underscored by recent proprietary evaluations: while the state-of-the-art Gemini 2.5 Computer Use model achieves a 75\% success rate on aggregate UI control tasks, its performance drops significantly to 36\% on open-ended benchmark like \emph{WebWorld}~\citep{zhang2024large,GoogleDeepMind_Gemini2.5_2025}. This discrepancy highlights that while large proprietary backbones can leverage their inherent reasoning for general UI tasks, complex web interaction remains a distinct unsolved challenge. In contrast, smaller or open-source models typically rely on fine-tuning to bridge the capability gap. \emph{Supervised fine-tuning} (SFT) on human or synthetic demonstrations is widely adopted but remains limited by static data and poor generalization. By comparison, \textbf{reinforcement learning (RL)} provides a more adaptive framework for long-horizon optimization, especially when guided by \emph{dense, milestone-based feedback}~\citep{bai2024digirl,wang2024distrl}.



To rigorously evaluate the capabilities of aforementioned LLM-based web agents, a growing set of realistic web interaction benchmarks has emerged, including Mind2Web~\citep{deng2023mind2web}, WebArena~\citep{zhou2023webarena}, and WebShop~\citep{yao2022webshop}, as well as interactive simulation suites such as WebGames~\citep{thomas2025webgames}. These environments simulate realistic browsing and interaction scenarios across diverse domains—ranging from e-commerce and productivity tools to social platforms and general website control or navigation tasks—requiring agents to ground language understanding in multi-step decision-making. Each benchmark poses unique challenges in perception, action planning, and context maintenance, collectively providing a comprehensive testbed for studying web-based autonomy. However, even with these advancements in environment design, current agents still exhibit significant weaknesses on long-horizon tasks~\citep{zhou2023webarena,zhang2024large}: proprietary systems like Gemini Agent and open-source models alike experience sharp performance degradation as task complexity and sequence length increase. This trend underscores that real-world web interaction remains a long-standing challenge for sustained reasoning and adaptive planning across many steps.



Our empirical analysis further quantifies this failure: it stems from a breakdown in effective real-time planning and some overly distant, unreasonable goal settings. In other words, agents frequently enter non-productive action loops or commit to suboptimal goal paths, failing to identify the next logical milestone that would lead to progress. This problem persists across model scales and training paradigms. On the WebArena-Lite benchmark~\citep{liu2024visualagentbench,koh2024visualwebarena}, agents equipped with out-of-the-box proprietary models such as Gemini-2.5-Pro exhibit ``mid-task stuck'' behaviors in nearly 50\% of evaluation trajectories. Even after supervised fine-tuning (SFT) on human demonstrations, smaller open models like Gemma-12B-SFT still fail to progress in over 30\% of cases. The same deficiency dominates error distributions in previous specialized state-of-the-art open agentic systems such as WebRL(Llama3-8B), where high-level goal instability—often resulting from the divergence towards the target objective. Collectively, these observations highlight that current models—regardless of scale or fine-tuning—lack the robust internal planning and milestone-awareness required to sustain reasoning over extended interactions.



While prior approaches have attempted to handle complex agentic tasks through interleaved reasoning traces \citep{yao2022react}, static decomposition strategies \citep{zhou2022least}, or tree-based search \citep{yao2023tree}, a promising direction to mitigate their limitations is to introduce \textbf{explicit milestones or subgoals}, which serve as intermediate checkpoints guiding the agent's progress toward the final objective. To this end, two primary research directions have emerged. Hierarchical approaches like VSC-RL \citep{wu2025vsc} and stepstone curricula \citep{sharma2021autonomous} decompose tasks using intermediate objectives, yet often rely on latent representations and brittle RL formulations that suffer from severe training instability. Conversely, Process Reward Models (PRMs)~\citep{cui2025process,xi2025agentprm} provide dense feedback but typically depend on soft, learned signals susceptible to noise and over-optimization. Our method distinguishes itself by synthesizing these paradigms by unifying explicit, coarse-grained milestoning across both inference and training. These milestones or subgoals can be incorporated either at inference time—via structured planning and dynamic decomposition—or during reinforcement learning (RL) fine-tuning through milestone-based reward shaping. However, integrating such \emph{milestoning} into web agents introduces three key challenges: (C.1) \emph{Where do subgoals come from, and how reliable are they?} (C.2) \emph{How can subgoal reasoning be integrated at inference time without prohibitive latency or contextual overhead?} and (C.3) \emph{How can intermediate rewards be embedded in RL training to improve credit assignment and stability without hindering final goal completion?}



To address these challenges, we propose a \textbf{subgoal-assisted framework} that unifies \emph{online inference-time planning} with \emph{offline RL fine-tuning} via milestone-based shaping. As illustrated in Figure~\ref{fig:Intro}, our system decomposes high-level goals into structured subgoals, enabling the agent to reason hierarchically during inference while receiving denser feedback during training.  Fundamentally, our approach follows a simple principle: \emph{``If the final goal is difficult to reach directly, increasing the probability of reaching meaningful intermediate milestones helps.''} 
Our main contributions are summarized as follows:

\begin{enumerate}
    \item \textbf{Automated Failure Analysis:} We introduce an automated failure analyzer that systematically uncovers the dominant failure modes in Web-Navigation tasks. This analysis reveals key planning and complexity bottlenecks in existing agents and directly motivates our framework design.

    \item \textbf{Inference-Time Planning with Subgoals:} We integrate lightweight subgoal-guided planning directly into the agent’s inference loop, improving long-horizon reasoning and execution for both open and proprietary LLM backbones.

    \item \textbf{Milestone-Based Offline RL Fine-Tuning (MiRA):} We develop a complementary offline RL procedure that uses milestone-driven reward shaping to provide denser training signals, effectively mitigating the sparse-reward challenges inherent in Web-Navigation.
\end{enumerate}

\section{Related Work}

The creation of autonomous Computer Use Agents for complex online environments lies at the intersection of perception, planning, and control. The recent rise of large language models (LLMs) has provided a strong foundation for reasoning and decision making, enabling agents to operate in realistic, text-rich domains such as OS control and the web navigation \citep{koh2024visualwebarena,zhang2025ufo}. 


\subsection{GUI Agent and Reinforcement Learning Fine-Tuning}


LLM-based agents for GUI control or more specifically computer usage can be grouped into three paradigms: \emph{prompting-based}, \emph{imitation-based}, and \emph{RL-based}. Prompting-based agents steer frozen foundation models through structured instructions and tool use, achieving strong zero-shot performance but limited adaptability. Imitation-based agents instead rely on supervised fine-tuning (SFT) over human or synthetic demonstrations, which improves task alignment but depends on static data and fails to teach recovery from errors~\citep{zhai2024fine}. This brittleness has motivated a shift toward reinforcement fine-tuning, where agents learn from active interaction rather than passive replay. However, even with RL, adapting to complex workflows remains difficult due to the sparsity of feedback signals in multi-step environments.


This challenge of sustained reasoning in long-horizon tasks is not unique to a single domain, i.e., it remains a persistent bottleneck across the entire spectrum of GUI-based agents, including mobile device controllers \citep{bai2024digirl, bai2025digi, yang2023appagent, hong2312cogagent,wang2024distrl}, operating system assistants \citep{zhang2025ufo, xie2024osworld}, and general desktop automation. In the specific context of web agents, recent pipelines have attempted to combine paradigms to address these limitations. For instance, the WebRL framework applies a self-evolving curriculum to transform open LLMs into competent web agents by addressing training-task scarcity, feedback sparsity, and distribution drift \citep{qi2024webrl}. Similarly, in the commercial domain, OpenAI's Operator platform demonstrates a prompting and tool-use agent capable of executing web interactions (clicking, form-filling), yet it remains limited in sustained multi-step workflows \citep{openai2025operator}. Meanwhile, IBM Research's CUGA (Configurable Generalist Agent) extends this to enterprise web/API tasks, achieving state-of-the-art results on WebArena while highlighting long-horizon failures and the need for modular, iterative improvement \citep{shlomov2025benchmarks}. Despite these advances, key challenges persist: rewards in web navigation are often binary (success/failure) after many interactions, making credit assignment difficult; consequently, RL-based agents still show steep performance drops as task length increases, indicating that robust planning and error recovery remain unresolved.

\subsection{Goal-Conditioned LLM-Agent}

While smaller LLMs serve as low-level controllers, larger models are increasingly used as high-level planners. In hierarchical settings, an LLM planner produces intermediate sub-goals that a goal-conditioned policy executes~\citep{wang2023describe,zhang2025goal}. This ``LLM-as-Guide'' design allows semantic goal decomposition and pruning of irrelevant actions. Alternatively, unified architectures treat the LLM itself as an end-to-end policy whose internal reasoning trace acts as the plan~\citep{hong2025planning}.


In web environments, however, even such planning architectures face critical limitations. For example, the CUGA error‐analysis shows that tasks exceeding $\sim$10 interaction steps often fail due to impaired sub-goal coherence, poor re-planning, and drift from the original aim~\citep{qian2025webgrapheval,marreed2025towards}. As documented in the CUGA architecture~\citep{shlomov2025benchmarks}, the system employs a hierarchical planner–executor framework with explicit task decomposition, a persistent task ledger, and reflective re-planning mechanisms that track variables, repair plans, and validate tool calls. These components provide concrete re-planning capabilities within the agent, and thus directly motivate our focus on strengthening long-horizon planning fidelity. These findings underscore that the bottleneck is not only low-level action execution but sustained planning, monitoring, and adaptation across long horizons. To address this, sub-goal generation quality, dynamic re-planning mechanisms, and low-latency inference-time planning become central design concerns—areas we build upon in our method.

Recent efforts have introduced Process Reward Models (PRMs) to mitigate these long-horizon failures~\citep{xi2025agentprm,li2024process,cui2025process,choudhury2025process}. Unlike Outcome Reward Models (ORMs) that provide sparse feedback only upon task completion, PRMs offer dense, step-by-step supervision, enabling agents to verify intermediate reasoning and correct deviations in real-time. For instance, \citet{chae2025web} proposes Web-Shepherd, which leverages checklist-style sub-goal verification to monitor web navigation trajectories, significantly reducing error propagation. Similarly, AgentPRM~\citep{xi2025agentprm} introduces a dual-scoring mechanism measuring both ``promise'' (likelihood of success) and ``progress'' (inter-step advancement), facilitating effective inference-time search and pruning of suboptimal branches. However, learned PRMs often suffer from high inference overhead and susceptibility to reward over-optimization, particularly when ground-truth intermediate supervision is scarce. To address this, our approach achieves a ``best-of-both-worlds'' balance by replacing soft rewards with hard objectives. Unlike PRMs, which rely on noisy, unverifiable scalars to estimate progress, we utilize explicit milestones as rigid semantic checkpoints. This combination allows us to retain the continuous progress tracking while ensuring the verifiable reliability of ground-truth objectives.


\subsection{Goal-Conditioned Reinforcement Learning}

Sequential web tasks naturally align with goal-conditioned reinforcement learning (GCRL) \citep{kaelbling1993learning}, where the agent optimizes cumulative reward conditioned on a task goal. A major difficulty in GCRL is \emph{reward sparsity}, which hinders efficient credit assignment. Techniques such as Hindsight Experience Replay (HER) \citep{andrychowicz2017hindsight} address this by reinterpreting failed trajectories as successes for alternative goals, yielding denser learning signals. However, standard HER assumes Markovian rewards, which often limits its applicability to the non-Markovian, long-horizon nature of web navigation.

To overcome these limitations, recent research has pivoted towards on-policy GCRL and intrinsic motivation. For instance, \citet{gong2024goal} introduces GCPO, an on-policy framework that leverages self-curriculum learning to handle non-Markovian reward structures, significantly stabilizing training in complex sequential environments. Complementing this, WebAgent-R1~\citep{wei2025webagent} demonstrates that end-to-end on-policy RL—augmented with parallel trajectory generation and ``thinking'' steps—can surpass improved imitation learning baselines without the complexity of off-policy buffer management.

Beyond pure RL updates, explicit subgoal scaffolding remains critical for exploration. While early efforts like VSC-RL~\citep{wu2025vsc} utilized subgoal-conditioned learning to boost sample efficiency, they often struggled to balance intermediate subgoal completion with final goal optimality. Newer approaches address this by learning latent subgoals or internal world models. For instance, \citet{park2023hiql} propose HIQL, which learns a high-level policy over latent states, effectively decoupling strategic planning from low-level control. In a parallel direction, \citet{duan2024learning} utilize world models to simulate next outcomes, enabling agents to explore sparse-reward environments via predicted rollouts. These methods advance the field by enabling hierarchical reasoning, allowing agents to sustain progress in long-horizon tasks through abstract, learned objectives. However, applying these latent or model-based approaches to web agents introduces critical limitations. Latent subgoals \citep{park2023hiql} lack semantic interpretability, making it impossible to explicitly verify the agent's intermediate progress. Similarly, relying on world models \citep{duan2024learning} to simulate outcomes is computationally expensive and prone to compounding errors in dynamic, open-ended web environments. In contrast, our method bypasses latent abstractions and noisy simulations entirely by grounding reasoning in explicit, semantically verifiable milestones. Furthermore, we couple this with a specialized RL fine-tuning strategy where milestones function strictly as auxiliary rewards; this guarantees that intermediate supervision stabilizes training and improves credit assignment without biasing the agent against the primary, ground-truth objective.


\section{Preliminary}


\subsection{Problem Formulation}
\label{sec:problem_formulation}

We formulate the web navigation task as a finite-horizon Partially Observable Markov Decision Process (POMDP), defined by the tuple $\mathcal{M} = \langle \mathcal{S}, \mathcal{A}, \Omega, \mathcal{T}, \mathcal{O}, R, H \rangle$. Here, $\mathcal{S}$ represents the latent environment state (e.g., server-side databases and hidden DOM elements), which is inaccessible to the agent. At each timestep $t$, the agent receives a partial observation $o_t \in \Omega$ (comprising the rendered HTML, screenshot, and task instruction $g$) governed by the observation function $\mathcal{O}(o_t \mid s_t)$.

The agent selects a discrete action $a_t \in \mathcal{A}$ (e.g., clicking, typing, scrolling) according to a policy $\pi(a_t \mid o_{\leq t})$ conditioned on the history of observations. Following the action, the environment transitions to the next state $s_{t+1}$ governed by the implicit dynamics $\mathcal{T}(s_{t+1} \mid s_t, a_t)$. The episode terminates either when the task is successfully verified or when the number of interactions reaches the horizon $H$, with the objective of maximizing the expected cumulative reward.



\noindent \textbf{State Representation.}
As claimed above, in web navigation tasks, the instantaneous observation alone is insufficient to characterize progress, because many actions depend on prior interactions (e.g., previously opened panels, typed queries, or navigation paths).
Thus, we represent the state at time $t$ as the combination of the current webpage view and the full interaction history:
\(
s_t = [a_1, a_2, \ldots, a_{t-1},\, o_t],
\)
where $o_t$ denotes the current DOM tree, textual caption, or multimodal screenshot representation. This history-augmented form captures both the evolving interface state and the agent's preceding actions, enabling more accurate modeling of long-horizon dependencies in web tasks.

\noindent \textbf{Reward and Objective.}
The environment provides a sparse binary reward signal, following standard practice in web-agent benchmarks.  
Formally, at each step the agent receives
\(
    r_t(s_t, a_t, g) = \mathbb{1}[s_{t+1} \in \mathcal{S}_g],
\)
where $\mathcal{S}_g$ denotes the set of states that satisfy the goal $g$. An episode terminates either when the goal is achieved or when the finite horizon $H$ is reached.

In practical Web Navigation setups, this reward is computed by an \emph{automatic LLM-as-Judge} that evaluates goal satisfaction using the full interaction context—not just the most recent observation. The judge takes as input the task instruction $I$, the history of actions taken so far, and the final-state information (HTML and, when available, screenshots), and then determines whether the goal condition has been met. We adopt the open-source trained ORM from prior work WebRL~\citep{qi2024webrl} as the final goal-level checker, and for the subgoal-level checker, we use the larger teacher model Gemini-2.5-pro. The prompts used are the same and are introduced in Appendix~\ref{sec:apdx_exp}.

For clarity, our notation abstracts this process into the indicator $r_t$ above. Although we later introduce a dense reward-shaping mechanism in \S\ref{sec:mira_rl}, we first present the underlying sparse objective. The learning goal is to find a policy $\pi$ that maximizes the expected discounted return
\(
J(\pi) = \mathbb{E}_{\tau \sim \pi} \Big[ \sum_{t=0}^{H} \gamma^t r_t \Big],
\)
where $\gamma \in [0,1]$ is the discount factor.


\subsection{Potential-Based Reward Shaping (PBRS)}

In sparse-reward settings such as web navigation, credit assignment becomes extremely challenging. Potential-Based Reward Shaping (PBRS)~\citep{ng2003shaping} augments rewards as
\begin{equation}
    \tilde{r}_t = r_t + \gamma \Phi(s_{t+1}) - \Phi(s_t),
\end{equation}
where the potential function $\Phi(s)$ encodes progress toward the goal. PBRS preserves the optimal policy when $\Phi$ depends only on state, and is equivalent to initializing the $Q$-function with this potential~\citep{wiewiora2003potential}. A formal equivalence proof is given in Appendix~\ref{sec:apdx_proof}.

However, exact PBRS is rarely tractable in high-dimensional environments where true potentials are unknown. As a result, practical variants relax policy-invariance guarantees in favour of learnability. Dynamic PBRS~\citep{devlin2012dynamic} allows time-varying potentials, and many modern methods instead use learned value estimates or intrinsic rewards as shaping signals~\citep{gao2015potential, burda2018exploration}. These progress-oriented signals provide dense feedback that is crucial for long-horizon tasks, even when they no longer satisfy strict PBRS assumptions. Our method follows this pragmatic shaping perspective rather than the classical PBRS setting.

\section{Motivation: Quantitative Failure Analysis and Opportunities}

Before introducing our methodology, it is essential to examine the root causes behind the persistent failures of current web agents. While prior work has primarily reported aggregate performance metrics such as task success rate or trajectory completion, these end-to-end numbers reveal little about \emph{why} agents fail. To design effective alignment strategies and curriculum improvements, we must move beyond outcome statistics and toward a fine-grained understanding of failure behaviors.

The central goal of this section is therefore to \emph{\textbf{conduct data-driven analysis towards failures behind the numbers which highly motivates our study}}---to diagnose the dominant error modes and structural weaknesses in existing large language model (LLM) web agents. Such an analysis provides the foundation for targeted improvements in planning, credit assignment, and milestone-based reasoning. To this end, we developed a comprehensive automated analyzer that performs large-scale trajectory evaluation and categorization. Leveraging the reasoning and perception capabilities of the Gemini-2.5-Flash model, our analyzer systematically inspects and summarizes tons of recorded trajectories to uncover behavioral patterns. In short, our analyzer performs three core functions, which we detail below: \textbf{(1)} it first acts as an objective judge, providing outcome-based summarization; \textbf{(2)} it then performs failure categorization into a specific error mode; and \textbf{(3)} it conducts a root-cause analysis to pinpoint the exact decision step where the agent deviated.

\subsection{Function 1: Trajectory Evaluation and Summarization}

As its first function, the analyzer provides an objective \textbf{Failure Summarization}. This process determines \textit{if} a failure occurred based on hardcoded behavioral rules before passing the trajectory to the next stage of diagnosis. This procedure gives out the failure summary: A trajectory is marked as a failure solely based on the outcome of the final benchmark checking rules (i.e., \emph{``Only judge, never guess''}). This ensures that classification is objective and free of bizarre hallucinations, focusing mostly on the key state validation (e.g., Achieved XX, not achieved XX). A typical objective summary, adhering to this rule, would be:
    \begin{tcolorbox}[
        colback=blue!5!white,    
        colframe=blue!75!black, 
        title=\textbf{Failure Summary}, 
        fonttitle=\bfseries\texttt, 
        colbacktitle=blue!20!white, 
        coltitle=black,         
        rounded corners,        
        arc=4pt,                
        boxrule=1pt             
    ]
    \texttt{``Agent navigated to Hollidaysburg (film) page. Agent extracted information from this page. Agent failed to move forward and terminated. Final benchmark check: FAILED.''}
    \end{tcolorbox}

\subsection{Function 2: Categorization via Prioritized Hardcoded Rules}

Once a trajectory is confirmed as a failure by the first function, the analyzer's second function is \textbf{Categorization via Prioritized Hardcoded Rules}. Failed trajectories are systematically classified into one of four mutually exclusive failure modes using the hard-coded rules detailed in Table \ref{tab:failure_rules}, which are applied in a strict, descending order of priority.

\begin{table}[htbp]
    \centering
    \caption{\textbf{Hardcoded Rules for Failure Mode Classification}}
    \label{tab:failure_rules}
    \begin{tabularx}{\textwidth}{>{\bfseries}p{4cm} | >{}X} 
        \toprule
        Category & Hardcoded Rules \\
        \midrule
        Stop at Wrong Page (\textit{Wrong Termination}) & The agent called the $\text{exit()}$ function or terminated, but the message was judged \textit{incorrect} by the final benchmark rules. \\
        \midrule
        Get Stuck Midway & The agent failed and exhibited repetition, characterized by identical last $N$ actions or a short action-state sequence repeating $M$ times. \\
        \midrule
        Fail to Make Reasonable Attempt & The agent already reached the right page, but didn't attempt to $\text{exit()}$ \textbf{OR} The agent failed very early, either by an immediate, short $\text{exit()}$ call or by clicking a nonsensical, distracting element as its first action. \\
        \midrule
        Others (\textit{System error, etc.}) & The agent failed, but did not trigger Rules A, B, or C, indicating a general deviation from the success path. \\
        \bottomrule
    \end{tabularx}
\end{table}

\subsection{Function 3: Identifying the Decision Step}

The third and final function of the analyzer is to pinpoint the exact \textit{Key Decision Step} where the agent deviated from an optimal or successful trajectory. This capability moves beyond simple error categorization to actionable root-cause analysis. To automate this pinpointing process with high fidelity, we employ a differential analysis strategy. The analyzer aligns the agent's failed trajectory $\tau_{\text{fail}}$ against a set of \textit{Reference Success Traces} derived from two distinct sources:
 \textbf{(1) Teacher Demonstrations:} Successful trajectories generated by superior, proprietary reasoning models (e.g., IBM-CUGA) which serve as an upper-bound oracle for reasoning quality. \textbf{(2) Golden Paths from Peers:} Validated ground-truth sequences from the training dataset or historical successful runs by other trained agents (e.g., WebRL agent). Then the analyzer scans for the \textbf{First Point of Significant Divergence}---the earliest timestep $t$ where the agent's action $a_t$ semantically contradicts the reference strategy. This method isolates the root cause by filtering out downstream errors that are merely symptoms of the initial mistake. Specifically, the system targets two primary divergence patterns:
\begin{itemize}
    \item \textbf{Semantic Deviation:} Identifying the step where the agent chooses a link or search query that conceptually drifts from the cluster of actions found in successful traces (as seen in Step 3 of Figure~\ref{fig:decision_step_example}).
    \item \textbf{Stagnation and Loop Onset:} For ``Get Stuck'' failures, the analyzer utilizes state-hashing to detect cyclic sub-graphs in the interaction history. It retroactively marks the entry point into the loop—rather than the loop itself—as the key decision step, highlighting the moment the agent failed to recognize the lack of state progression.
\end{itemize}

\begin{wrapfigure}{r}{0.6\textwidth} 
    \centering
    \vspace{-10pt}
    \includegraphics[width=0.58\textwidth]{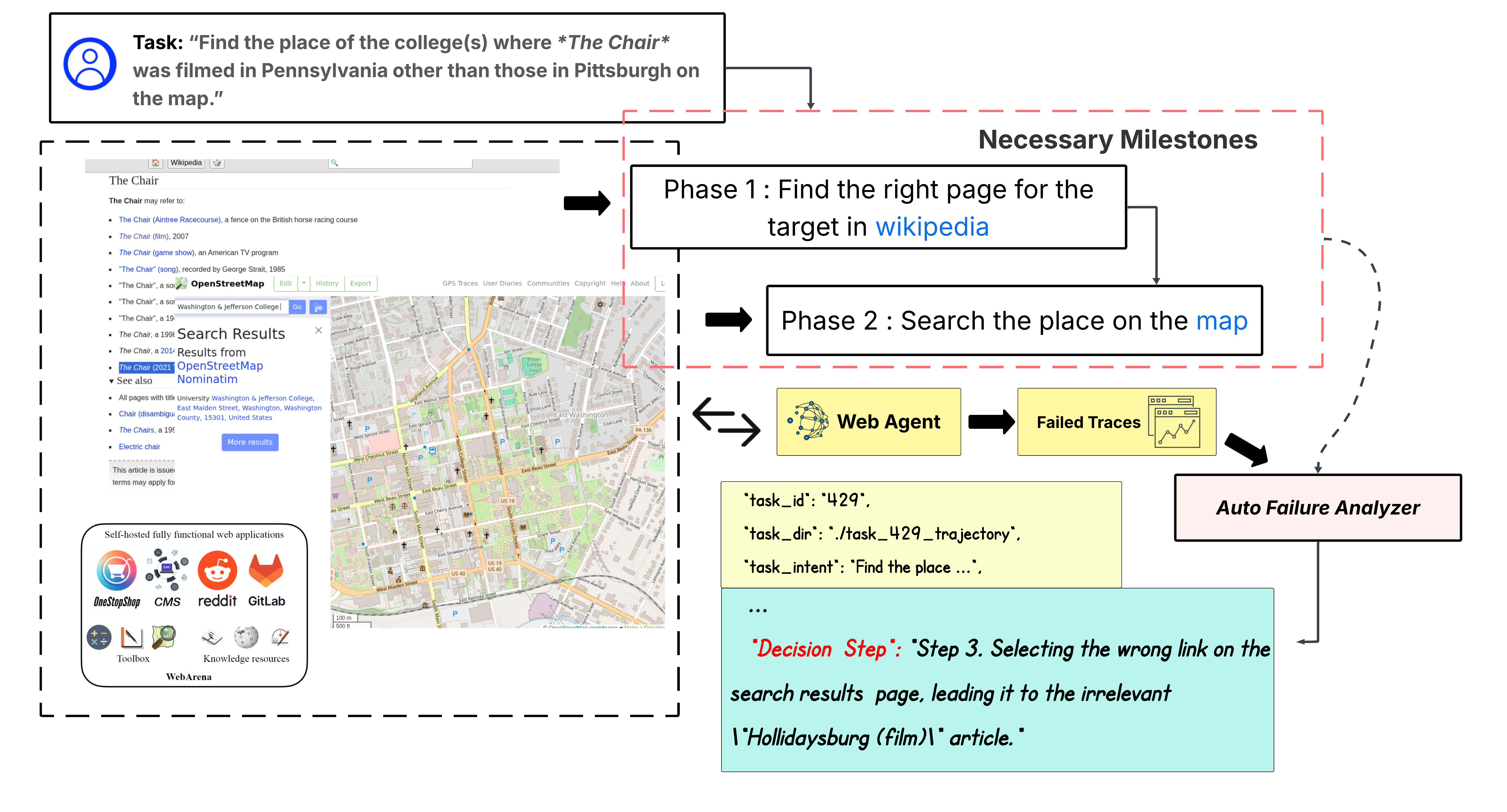} 
    \caption{\textbf{Automated Identification of Key Decision Steps.} The analyzer compares the agent's trajectory against task intent and known successful patterns to pinpoint the exact step (e.g., Step 3) where a critical deviation occurred.}
    \label{fig:decision_step_example}
    \vspace{-10pt}
\end{wrapfigure}

As illustrated in Figure \ref{fig:decision_step_example}, the analyzer parses the agent's complete trajectory against the task intent. In this typical example (Task ID 429), the goal is to find a specific filming location for ``The Chair'' in Pennsylvania. The optimal path involves two main milestones (subgoals): (1) finding the correct Wikipedia page to identify the college, and (2) searching for that college on the map.

The auto-analyzer successfully identifies that the critical failure occurred at \textbf{Step 3}, where the agent selected an incorrect link (``The Chair (2007 film)'') instead of identifying the correct TV show or other relevant entries. This single decision cascaded into a failure, as it led the agent down an irrelevant path of information.

The reliability of this identification stems directly from the robust classification rules defined previously (Function 2). Because the analyzer first rigorously classifies the \textit{type} of failure (e.g., ``Get Stuck Midway'' vs. ``Fail to Make Reasonable Attempt''), it can narrow down the search space for the decisive error. For instance, in a ``Get Stuck Midway'' scenario, the decision step is often the entry point into the repetitive loop. In a ``Wrong Termination'' scenario, it is the final action before incorrectly claiming success. By grounding the search for the key decision step in these validated failure modes, the analyzer achieves high precision in identifying the exact moment the agent's reasoning faltered.

\subsection{Soundness of the Classifier}

To ensure the reliability of our automated failure categorization, we implemented a rigorous prompting and validation framework.

\begin{wraptable}{r}{0.35\textwidth}
    \centering
    \vspace{-5pt}
    \captionsetup{font=footnotesize}
    \caption{Agreement between automated analyzer and human labels ($N=40$).}
    \label{tab:validation_scores}
    \small
    \begin{tabular}{lr}
        \toprule
        \textbf{Failure Type} & \textbf{Score} \\
        \midrule
        Stuck in Midway & 10/10 \\
        Wrong Termination & 10/10 \\
        Fail Attempt & 8/10 \\
        Others & 9/10 \\
        \bottomrule
    \end{tabular}
    \vspace{-8pt}
\end{wraptable}

\noindent \textbf{Few-Shot Prompts:} The analysis system was primed using four distinct failure examples—one for each defined category—to ground the analyzer's reasoning. This few-shot context provides the Large Language Model with concrete reference points for distinguishing between subtle failure modes (e.g., distinguishing a ``stuck'' agent from one that simply fails to attempt the task).

\noindent \textbf{Validation:} We conducted sanity checks on the classification accuracy using a manually labeled dataset of 40 examples (10 instances per failure type). As shown in Table~\ref{tab:validation_scores}, the analyzer demonstrates high agreement with human annotators, particularly in detecting procedural failures like ``Stuck in Midway'' and ``Wrong Termination'' ($100\%$ accuracy). Slight deviations occurred in the ``Fail to Attempt'' category (8/10), where the boundary between a very short unsuccessful attempt and zero attempt can be semantically ambiguous. Validation also included an analysis of action distributions for each failure type to confirm behavioral consistency across the classified subsets.

\subsection{Failure Number Analysis: The Mid-Stuck Challenge}

Quantitative analysis of the failure distribution for various agents (Gemini-2.5-pro, Gemma, and Gemma-SFT) demonstrates that \textbf{Get Stuck Midway} is the most significant challenge across all tested models.

\begin{figure}
    \centering
    \includegraphics[width=1.0\linewidth]{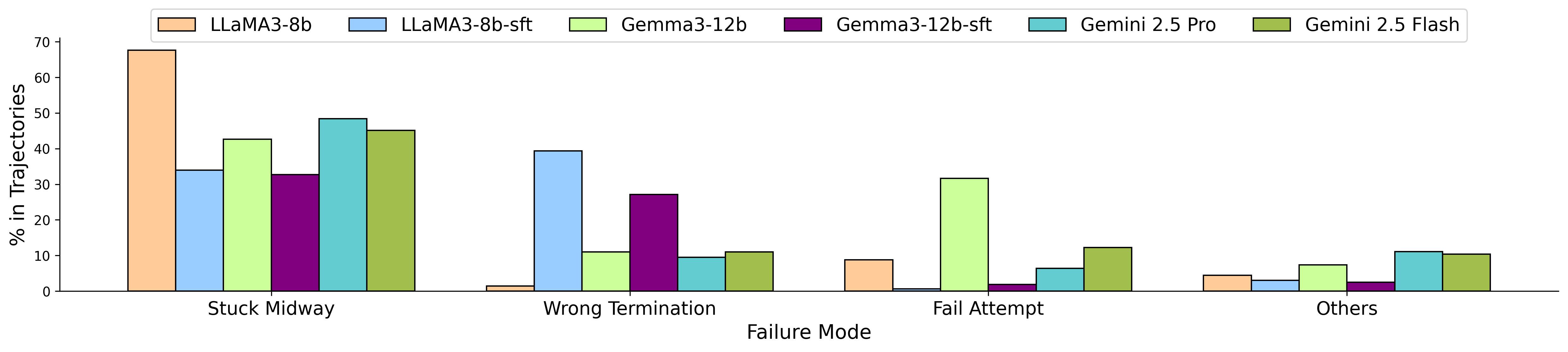}
    \captionsetup{justification=centering}
    \caption{Overview of Failure mode distribution of existing out-of-box models}
    \label{fig:dist}
\end{figure}

\begin{itemize}
    \item \textbf{Midway Dominance:} The majority of failures across all baseline models (Gemini, Gemma, and Gemma-SFT) stem from the \emph{Get Stuck Midway} mode, with failure ratios ranging from approximately $42\%$ to $49\%$. This mode is indicative of policy oscillation and poor long-term planning, particularly when the agent encounters unexpected states or is required to execute a long, complex sequence of actions.
    
    \item \textbf{Planning Deficiencies:} This high frequency suggests that existing policies, largely based on Supervised Fine-Tuning (SFT) or similar techniques, lack the robust planning and self-correction mechanisms necessary for sequential, multi-step web tasks. An agent stuck in a loop is fundamentally failing to update its internal state or subgoal, thereby repeating failed actions.
    
    \item \textbf{The Value of Subgoals and Planning:} The other primary failure mode, \emph{Fail to Make Reasonable Attempt}, while significant for Gemma (approximately $32\%$), is less dominant than the Midway failure. Agents that fail midway are often closer to the solution but lack the means to complete the final steps or break an incorrect loop. Therefore, policies integrated with explicit \emph{subgoal completion} or \emph{trajectory planning} signals are expected to yield the largest performance gains by directly targeting the pervasive \emph{Get Stuck Midway} category.
\end{itemize}

\section{Subgoal-Oriented Web Agent}

Building on the insights from our failure analysis, we hypothesize that \emph{subgoal setting serves as a critical mechanism for strengthening long-horizon reasoning and stabilizing web agent performance}. Specifically, we posit that introducing well-defined milestones can enhance both online interaction (through structured planning) and offline RL fine-tuning (through denser, intermediate rewards). To validate this hypothesis, we decompose our methodological contributions into three key components: generating reliable subgoals, improving inference-time performance using large proprietary models, and enhancing RL fine-tuning through subgoal-driven reward shaping. (Addressing \textbf{C.1})

\subsection{Subgoal Generation}
\label{sec:sg_gen}

\begin{wrapfigure}{r}{0.6\textwidth}
    \centering
    \vspace{-10pt}
    \includegraphics[width=\linewidth]{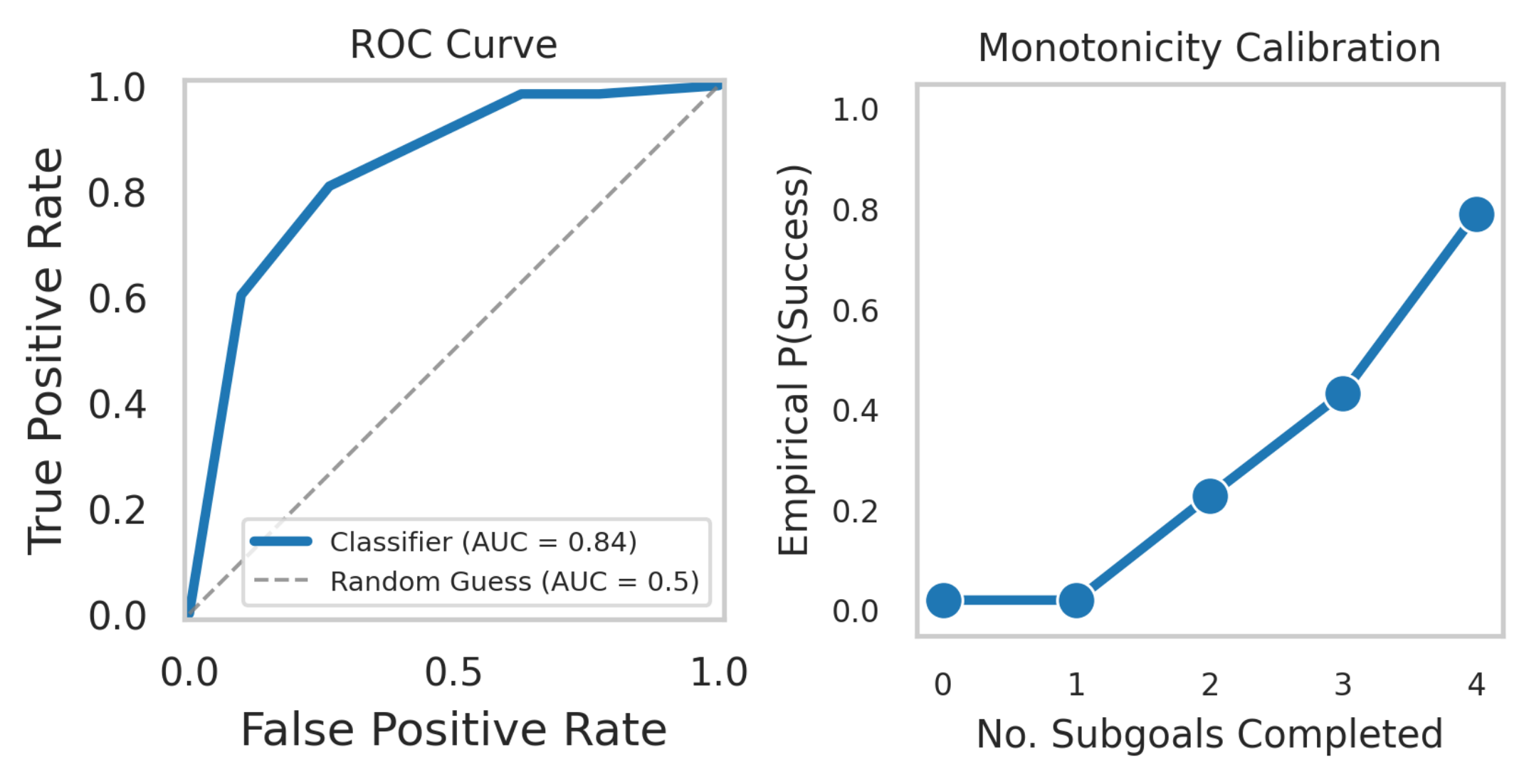}
    \caption{
        Validation of graded subgoal agreement. 
        \textbf{(Left)} The ROC curve, using the fraction of completed subgoals $s_i$ as a score, yields a high AUROC of 0.84. 
        \textbf{(Right)} The Monotonicity Calibration plot shows a strictly increasing probability of success $P(y=1 \mid m)$ as more subgoals $m$ are completed. 
        Both plots confirm the subgoals are a reliable progress signal.
    }
    \vspace{-10pt}
    \label{fig:graded_agreement_plots}
\end{wrapfigure}

To effectively guide web agents, we generate subgoals that bridge the gap between abstract user intents and precise step-by-step actions. Achieving this requires ensuring both reliability and proper granularity, which necessitates a fundamental understanding of website-specific constraints and features. Our focus isn't just on dividing the task, but on creating a clear sequence of progress signals that lead the agent successfully through long, complex interactions. We provide representative examples of these generated subgoals for each website category in WebArena in Table~\ref{tab:subgoal_examples} (Appendix~\ref{sec:apdx_subgoaleg}), and illustrate their online integration in Figure~\ref{fig:agent_architecture}.



\paragraph{Methodology.}
We leverage the teacher model, \textbf{Gemini-2.5-pro}, to generate subgoals given a high-level task description and the current Web state. To ensure robustness and generalization, we employ an iterative in-context learning strategy. Specifically, we curate a diverse set\footnote{2 examples for each website category} of few-shot demonstrations mapping intents to logical intermediate milestones. To mitigate positional bias and overfitting to specific action sequences, we inherently randomize the distribution of these few-shot examples during the prompting phase. This shuffling ensures the model attends to the \textit{semantics} of the task rather than memorizing rigid execution patterns. We then follow the validation process as shown below and iteratively optimize until we find the best generation contextual inputs for the teacher model.

\paragraph{Evaluation of Subgoal Quality.}
To validate the utility of our generated subgoals, we analyze the correlation between the generated subgoal vectors and the ground-truth final success across a validation set of agent traces\footnote{Agent is trained with MiRA and the base model is Gemma3-12b}. Let $i$ denote the index of a specific trace and $K$ be the total number of generated subgoals. We define the subgoal completion vector for trace $i$ as $\mathbf{z}_i \in \{0,1\}^K$, where $z_{i,k}$ indicates the completion status of the $k$-th subgoal. We assess the relationship between $\mathbf{z}_i$ and the final success label $y_i \in \{0,1\}$ on two levels: (1) \textit{Exact Equivalence}, testing if completing all subgoals is synonymous with success; and (2) \textit{Graded Agreement}, testing if partial completion serves as a reliable progress score.

\paragraph{Results and Analysis.}
Our analysis reveals that the generated subgoals function best as a continuous progress indicator rather than a strict binary gate.

First, under the \textit{Exact Equivalence} regime, we observe an Equivalence-F1 score of $0.6847$. While the Precision is high ($0.7917$), indicating that completing all subgoals is a strong indicator of success, the Recall is moderate ($0.6032$). This suggests that strict adherence to every generated subgoal is not a necessary condition for success; the agent occasionally finds valid alternative paths that bypass specific milestones (false negatives).

However, under the \textit{Graded Agreement} regime, the subgoals demonstrate exceptional utility. We define a progress score $s_i = \frac{1}{K}\sum_{k=1}^K z_{i,k}$ representing the fraction of subgoals completed for trace $i$.
\begin{itemize}
    \item \textbf{Discriminative Power:} The Area Under the ROC Curve (AUROC) for $s_i$ is \textbf{0.84} (Figure \ref{fig:graded_agreement_plots}, left). This high value indicates that the generated subgoals effectively rank-order traces, distinguishing promising trajectories from failures with high reliability.
    \item \textbf{Monotonicity:} As shown in Figure \ref{fig:graded_agreement_plots} (right), we observe a strictly monotonic relationship between the number of completed subgoals and the empirical probability of success. This is statistically confirmed by a Kendall's $\tau$ rank correlation of \textbf{0.4585} ($p < 0.001$).
\end{itemize}

\paragraph{Conclusion.}
These metrics confirm that our iterative prompting strategy with Gemini-2.5-pro yields subgoals that are logically consistent and predictive. While not acting as rigid constraints, they provide a dense, calibrated signal of progress ($s_i$), allowing the agent to estimate its proximity to success dynamically during execution.

Having established that our generated subgoals serve as a reliable, monotonic estimator of task progress, we now demonstrate how this signal is integrated during both online inference and offline training in the following two sections.

\subsection{Enhancing Online Inference with Dynamic Milestoning}
\label{subsec:inference_enhancement}

Standard web agents often suffer from error propagation in long-horizon tasks, where a single early mistake cascades into failure. To mitigate this, we introduce a real-time planning mechanism that forces the agent to ``think'' before acting, effectively turning the generated subgoals into dynamic checkpoints. This explicit state tracking is achieved through our Dynamic Milestoning Framework (Figure \ref{fig:agent_architecture}). In practice, we efficiently leverage the thinking capabilities of Gemini-2.5-pro, i.e. Gemini-SGO, to ground abstract plans into concrete execution traces. (Addressing \textbf{C.2})
\begin{figure}[t!]
    \centering
    \includegraphics[width=\linewidth]{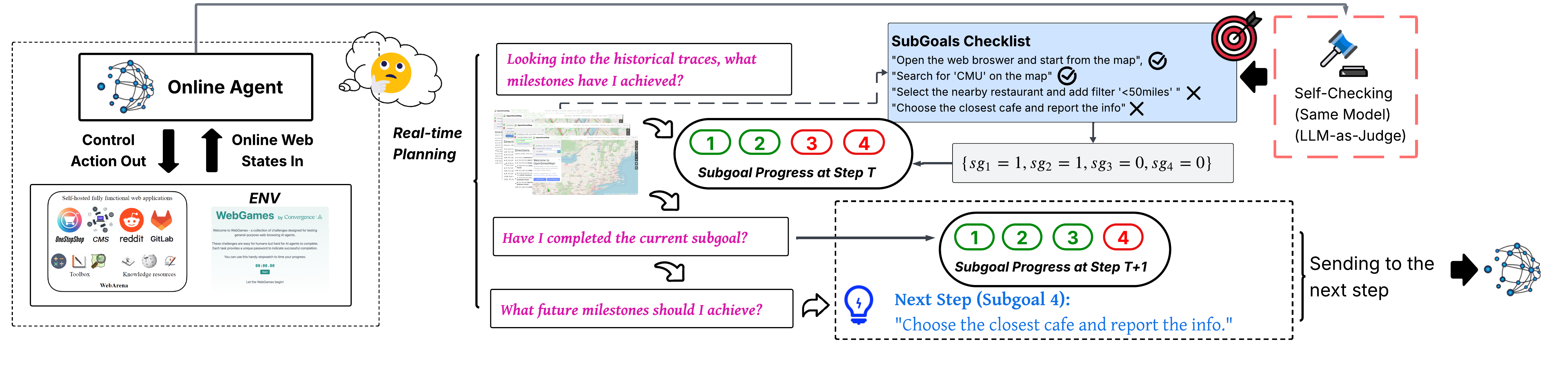}
    \caption{
        \textbf{Dynamic Milestoning Framework for Enhanced LLM Agent Inference.} 
        The architecture depicts the real-time feedback loop where the online agent's actions are monitored against a \textbf{SubGoals Checklist}. The reasoning model itself uses trace reflection to determine progress ($\mathbf{z}_{t+1}$), providing a dense, grounded signal that directs the agent's next planning step and enables self-correction.
    }
    \label{fig:agent_architecture}
\end{figure}

\paragraph{Grounding via Retrospective Reflection.}
Instead of blindly executing a static sequence, our agent engages in an iterative self-reflection process~\citep{shinn2023reflexion,madaan2023self,zhou2023language} at each time step $t$. As illustrated in our inference architecture, the agent queries its own historical interaction traces—comprising screenshots and action logs—to answer three critical questions:
\begin{enumerate}
    \item \textit{``Looking into the historical traces, what milestones have I achieved?''}
    \item \textit{``Have I completed the current subgoal?''}
    \item \textit{``What future milestones should I achieve?''}
\end{enumerate}

This process effectively grounds the high-level subgoals in the low-level reality of the web environment. By employing an \textit{AutoRater} (LLM-as-Judge) module instantiated within the same thinking model, the agent compares its current visual state against the \textit{SubGoals Checklist}. This allows the agent to maintain an explicit belief state of its progress for the current trace $i$, represented by the binary subgoal vector $\mathbf{z}_{i,t} = [z_{i,1}, z_{i,2}, \dots, z_{i,K}]$, where $z_{i,k} \in \{0,1\}$.

\paragraph{Contextual Grounding and Error Recovery.}

This milestoning mechanism enhances the online inference process by transforming the typically opaque execution state into a structured, explicit context for the thinking model. By injecting the verified subgoal status ($\mathbf{z}_{i,t}$) directly into the reasoning loop, we provide the agent with a ``situational awareness'' that formalizes the state reflection mechanisms explored in recent works~\cite{bishop2024latent,rawles2024androidworld}. If the \textit{AutoRater} determines that a required milestone $k$ is not yet satisfied ($z_{i,k} = 0$), this enhanced context prevents the agent from hallucinating progress. Instead, the model uses this state awareness to dynamically re-plan, generating actions specifically aimed at fulfilling the pending requirement. Conversely, confirmed milestones ($z_{i,k} = 1$) act as contextual anchors, allowing Gemini-2.5-pro to focus its planning capacity on the immediate next step rather than re-processing the entire task history. This ensures that every low-level control action is logically derived from a verified understanding of the current progress.

Additional analysis on the trade-offs between reasoning depth and inference efficiency is provided in Appendix~\ref{apdx:thinking_tradeoff}.

\subsection{Agent RL Finetuning with Subgoals (MiRA)}
\label{sec:mira_rl}


To incorporate subgoal-level feedback and provide dense reinforcement signals, MiRA extends the value estimation framework of prior work~\citep{qi2024webrl, bai2024digirl,wang2024distrl}. As illustrated in Figure~\ref{fig:mira_workflow}, the training pipeline introduces a dual-critic architecture: a standard \emph{Goal-conditioned value critic} $V_\phi(s,g)$ trained on binary outcomes (final success/failure), and a novel \emph{potential critic} $P_\psi(s,g)$ that acts as a dense \textbf{progress model} driven by a Sub-Goal Checker (Addressing \textbf{C.3}). 

If we visually track the workflow: trajectories generated in the \textbf{Interact} phase (left)—exemplified by the failures and successes—are fed into these critics. The critics' outputs are then synthesized via \textbf{Reward Shaping} to guide the \textbf{Actor Policy Update} (right), while training stability is ensured by applying \textbf{Actor Perplexity Filtering} to the \textbf{Experience Replay Buffer} (bottom), which would be used for both current and next phase of RL training (See $\S$\ref{sec:iterative_rl}). To better illustrate our novel design, we first explain from the transformation of discrete subgoal completion signals into progress labels in the following paragraphs.

\begin{figure}[t]
    \centering
    \includegraphics[width=\textwidth]{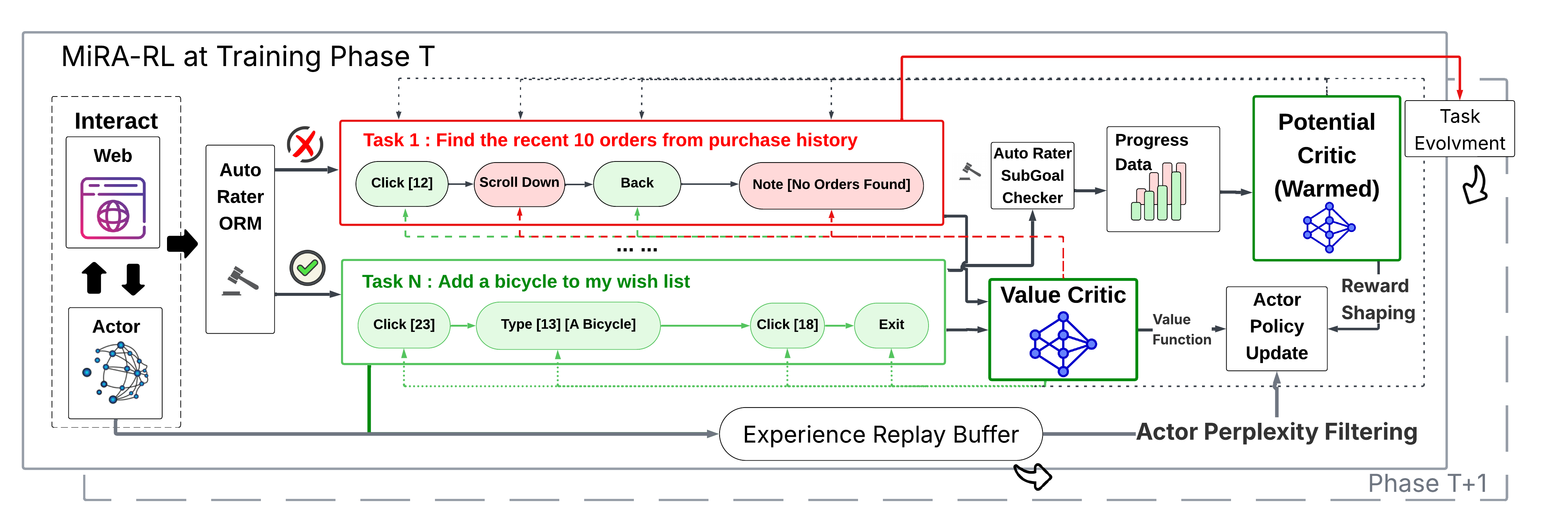}
 \caption{\textbf{The MiRA-RL Training Pipeline.} During the interaction phase, the agent generates trajectories—such as the successful ``Add a bicycle'' task or the failed ``Find recent orders'' task. These are evaluated by an Auto Rater (binary final success) and a SubGoal Checker (intermediate progress). This data trains two distinct critics: a \textbf{Value Critic} $V_\phi$ for final success and a \textbf{Potential Critic} $P_\psi$ that models progress. The Actor policy is updated using shaped rewards, with updates stabilized by Actor Perplexity Filtering.}
    \label{fig:mira_workflow}
\end{figure}


\paragraph{Progress Labeling via Subgoal Completion.}
For each trajectory and task goal $g$, we assume access to a binary subgoal--completion vector at each step, $\mathbf{z}_t = (z_{t,1}, \dots, z_{t,K}) \in \{0,1\}^K$, where $z_{t,k}=1$ indicates that the $k$-th subgoal has been achieved\footnote{Only positive traces are sampled}. Key steps are defined where the cumulative count $c_t$ increases ($c_t > c_{t-1}$).

\begin{figure}[t]
    \centering
    \includegraphics[width=1.0\textwidth]{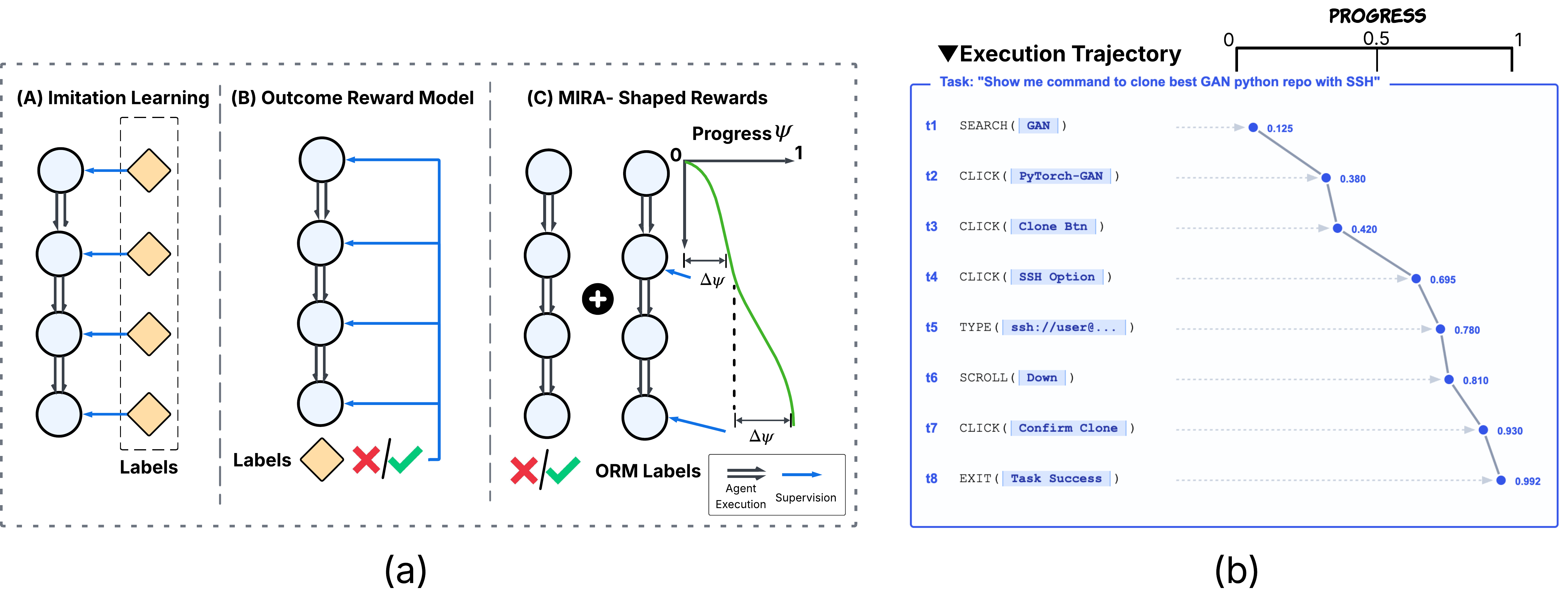}
    \caption{\textbf{Potential-Based Reward Shaping.} (Left) A comparison of supervision signals: (A) Imitation learning focuses on exact step matching; (B) Outcome Reward Models (ORM) provide sparse feedback only at termination; (C) MiRA utilizes Shaped Rewards via a learned potential function. (Right) An execution trajectory for a successful task ``Clone best GAN repo''. The predicted progress score $\psi$ (green curve) increases monotonically as the agent completes subgoals (e.g., ``Click Clone Btn'', ``Click SSH Option''). This generates a dense $\Delta\psi$ reward at every timestep $t$.}
    \label{fig:potential_concept}
\end{figure}

To obtain smooth supervision rather than a discontinuous step function, we assign normalized progress labels $p_t^\ast \in [0,1]$ through linear interpolation between key steps, as visualized by the green progress curve in Figure~\ref{fig:potential_concept} (Right). Let $t_j$ and $t_{j+1}$ be the steps where the $j$-th and $(j{+}1)$-th subgoals complete. For any $t\in[t_j,t_{j+1}]$:
\begin{equation}
    \alpha_t=\frac{t-t_j}{\,t_{j+1}-t_j\,},\quad
    p_t^\ast = (1-\alpha_t)\,\frac{j}{K} + \alpha_t\,\frac{j+1}{K}.
\end{equation}
This interpolation transforms discrete subgoal events (indicated by the diamond markers in Figure~\ref{fig:potential_concept}) into a continuous signal $\psi$, bridging the gap between sparse logic checks and dense RL updates. Figure~\ref{fig:potential_concept}(b) presents the predicted smooth potential scores from ``Gitlab'' tasks from WebArena.

\noindent \textbf{Example.} Consider a successful trajectory with $K=3$ subgoals. The first two subgoals complete at $t_1=2$ and $t_2=4$. Crucially, while the logic check for the final subgoal ($j=2$) passes at $t=6$, the agent performs necessary concluding actions (e.g., scrolling, verification, \texttt{exit}) until the true termination at $T=9$.
\begin{itemize}
    \item \textit{Segment 1 ($t \in [0, 2]$):} The agent progresses towards the first subgoal ($0 \to 0.33$). At $t=1$, $p_1^\ast \approx 0.16$.
    \item \textit{Segment 2 ($t \in [2, 4]$):} The agent progresses towards the second subgoal ($0.33 \to 0.66$). At $t=3$, $p_3^\ast = 0.5$.
    \item \textit{Segment 3 ($t \in [4, 9]$):} \textbf{Gap Anchoring.} To provide dense signal during the final administrative steps, we anchor the completion of the last subgoal to the trajectory end $T=9$ rather than $t=6$. The progress ramps from $0.66 \to 1.0$ over the interval $[4, 9]$. For a step inside the ``gap'' (e.g., $t=7$), $\alpha_7 = \frac{7-4}{9-4} = 0.6$, yielding a label $p_7^\ast = 0.4(0.66) + 0.6(1.0) \approx 0.86$.
\end{itemize}
These interpolated values $\{p_t^\ast\}_{t=1}^T$ serve as the \textbf{regression targets} for supervised training. By anchoring the final segment to $T$, we ensure the Potential Critic learns a monotonically increasing value function that drives the agent through the final verification steps to the sparse reward at termination.

\paragraph{Potential Critic for Dense Shaping.}
We train the \emph{potential critic} $P_\psi(s,g)\in[0,1]$ to regress onto these progress labels. As depicted in Figure~\ref{fig:potential_concept}.(a)-(C), this \emph{progress model} provides dense shaping signals distinct from the sparse ORM labels. While the standard ORM (Figure~\ref{fig:potential_concept}.(a)-(B)) effectively judges final validation, it fails to guide the agent during the intermediate steps. The potential critic fills this gap by estimating the ``distance'' to the next subgoal.

To construct a practical progress model, we augment a pretrained LLM with a multilayer perceptron (MLP) head and apply a sigmoid activation to constrain the output range to $[0,1]$. We employed Llama3-8b (WebRL~\citep{qi2024webrl}) and a vanilla-RL agent—operating without any milestone-based architectural modifications—to perform exploratory rollouts across 1,237 tasks. This collection phase yielded a diverse set of execution traces ranging from early failures to complete successes. Rather than relying on binary outcome labels, we post-processed these trajectories using a subgoal checker to derive dense supervision progress labels for every timestep. Finally, we utilized this granularly labeled dataset to fine-tune the Gemma-12B backbone in a single supervised learning stage, effectively distilling the subgoal logic into a differentiable potential critic.

\noindent \textbf{State Construction and Semantic Consistency.}
To enable reliable progress estimation, each state $s_t$ used by the potential critic must capture not only the most recent observation $o_t$ but also the \emph{semantic action history} leading to it. Concretely, we encode the state as
\(
s_t = [a_1, a_2, \ldots, a_{t-1},\, o_t],
\)
and append the natural-language goal instruction to form the model input $[h_{s_t}; e_g]$.  
A key observation from large-scale Web-agent rollouts is that \emph{successful trajectories solving the same task goal tend to exhibit a stable longest-common subsequence of actions}: even though surface-level behaviors vary across agents and tasks, the core semantic steps—such as ``open menu'', ``locate repository'', ``click clone'', ``verify output''—recur with strong regularity. These shared subsequences define the implicit structure of the task.  
By conditioning the potential critic on the full action–observation history, we allow it to learn this structure directly from data. As a result, states that lie along successful semantic paths are assigned higher potential values, which in turn supplies shaping rewards that encourage the agent to reproduce these key behaviors. In this way, the potential critic captures the latent procedural invariants of web tasks, providing dense guidance toward subgoal completion even when the environment supplies only sparse outcome-based signals.

We then utilize this pre-trained network as a \emph{warm-up} model. During the subsequent RL training, we continue to fine-tune the potential critic using fresh online rollouts generated by the current policy. This allows the potential landscape to evolve alongside the agent's capabilities. The critic is optimized via mean squared error:
\begin{equation}
\label{eq:potential_regression}
\mathcal{L}_P(\psi) = \mathbb{E}_{(s_t,g,p_t^\ast)} \Big[ \| P_\psi(s_t,g) - p_t^\ast \|^2 \Big].
\end{equation}



\noindent \textbf{Auxiliary Reward Shaping.}  
During RL, the potential critic provides an auxiliary shaping reward:
\[
r' = r + r_t^{(\mathrm{aux})} = r + \alpha \bigl[P_\psi(s_{t+1},g) - P_\psi(s_t,g)\bigr].
\]
This corresponds to the \(\Delta\psi\) arrows shown in Figure~\ref{fig:potential_concept}.(b). This term is added to the environment reward before computing advantage targets. Crucially, the main value critic \(V_\phi\) remains responsible for modeling final task success, ensuring that while the potential critic accelerates learning via dense feedback, it does not bias the optimal policy away from true task completion as recent research confirms that scaling potential-based auxiliary rewards is highly effective when grounded in success estimates. For example, SASR~\citep{ma2024highly} demonstrates that deriving shaped rewards directly from historical success rates significantly accelerates learning in sparse-reward settings without altering the optimal policy. Similarly, TIPS~\citep{anonymous2025tips} applies this design to LLM reasoning, using a ``teacher'' model to provide dense, scaled feedback based on the incremental probability of arriving at a correct solution.

\begin{designrationale}{Auxiliary Nature of Subgoals}  
Subgoal-based potentials offer dense feedback but do not modify the final optimization target. First, the main critic \(V_\phi\) is trained solely on the true task reward (binary success), while \(P_\psi\) influences only an additive shaping term. Second, \(P_\psi\) is trained entirely from post-hoc subgoal completions (the ``Auto Rater'' and ``SubGoal Checker'' path in Figure~\ref{fig:mira_workflow}) and carries no authority over final success. We further restrict shaping signals to those derived from \emph{positive traces}—i.e., trajectories that ultimately achieve the final goal—ensuring that subgoal completion is statistically correlated with task completion. If a subgoal does not correlate with final task success, its shaping influence naturally diminishes as the regression residual remains high.  
\end{designrationale}

\noindent
\textbf{Interpretation.}
The potential critic densifies supervision by predicting progress over long horizons, while the value critic captures the final success signal.  
Their combination yields advantages that remain aligned with the true task reward yet benefit from smoother intermediate credit assignment. The offline training workflow with subgoal-based progress shaping is presented Algorithm~\ref{alg:mira_curriculum}.


\paragraph{Goal-Conditioned Value Critic.}
The primary value critic $V_\phi(s,I)$ models the expected probability of successfully completing the overall task instruction $I$ from the current state $s$. Since the environment provides sparse, binary outcome labels $r(s_T, a_T, I) \in \{0,1\}$ (indicating failure or success), the value function naturally represents the probability of success, i.e., $V_\phi(s, I) \approx P(\text{success} \mid s, I) \in [0, 1]$.
To train this probabilistic estimator, we utilize a binary cross-entropy objective:
\begin{equation}
\label{eq:critic_value}
\mathcal{L}_V(\phi)
=
-\mathbb{E}_{(s,I)\sim\mathcal{D}}
\!\left[
r(s_T, a_T, I) \log V_\phi(s,I)
+
(1-r(s_T, a_T, I))\log(1-V_\phi(s,I))
\right].
\end{equation}
By quantifying the likelihood of success from the current observation $s$, this critic serves as a direct signal for the actor. It provides the baseline $V_\phi$ used in the computation of the advantage function. Following recent approaches~\citep{farebrother2024stop,bai2024digirl,qi2024webrl}, we adopt this classification-based formulation for training the value network, as opposed to standard regression, to better handle the sparse, binary nature of the terminal rewards.

\paragraph{Actor Optimization: Policy Update as a Supervised Regression.}
Our actor optimization strategy leverages the framework of relative entropy regularized reinforcement learning, treated as a variant of iterative Policy Mirror Descent (PMD)~\citep{mei2019principled,abbasi2019politex}. The rooted works~\citep{schulman2017equivalence, haarnoja2018soft} established a precise equivalence between ``soft'' Q-learning and policy gradient methods. Under this perspective, the policy update is modeled not as a simple gradient step, but as the solution to a constrained maximization problem: optimizing the expected return while minimizing the KL-divergence from a reference model $\pi_{\text{ref}}$.

This formulation is particularly powerful because it yields a closed-form analytical solution and was widely adopted in previous Agentic training paradigms~\citep{ouyang2022training, qi2024webrl, bai2024digirl, wang2024distrl, team2025kimi}. Following the derivation in these foundational efforts, the optimal policy $\pi^*$ for this objective is defined by the reference policy re-weighted by the exponential advantage function $A^*$. Formally, $\pi^*$ satisfies (simple proof can be found in Appendix~\ref{sec:apdx_proof}):
\begin{equation}
\label{eq:optimal_policy_exp_advantage}
\pi^*(a|s,I)=\pi_{\text{ref}}(a|s,I)\exp\!\left(\tfrac{1}{\beta}A^*(s,a,I)\right).
\end{equation}
To approximate $\pi^*$, we minimize the distance between the current policy $\pi_\theta$ and this optimal form, which yields the following mean-squared-error (MSE) regression objective:
\begin{align}
& \arg\min_{\theta}\mathbb{E}_{\nu}\!\left[\!\left(\log\pi_\theta(a|s,I)-\log\pi^*(a|s,I)\right)^{2}\!\right] \\
&=\arg\min_{\theta}\mathbb{E}_{\nu}\!\left[\!\left(\log\pi_\theta(a|s,I)-\log\pi_{\text{ref}}(a|s,I)-\tfrac{1}{\beta}A^*(s,a,I)\right)^{2}\!\right]\\
&=\arg\min_{\theta}\mathbb{E}_{\nu}\!\left[\!\left(\beta\log\tfrac{\pi_\theta(a|s,I)}{\pi_{\text{ref}}(a|s,I)}-A^*(s,a,I)\right)^{2}\!\right].
\label{eq:mse_objective}
\end{align}
This formulation is \textbf{off-policy}—the expectation $\mathbb{E}_{\nu}$ may be taken over any data distribution $\nu$, such as expert demonstrations or replay buffers.  
This regression-based view provides a stable and sample-efficient way to learn from large, static, and mixed-quality datasets where standard policy gradient methods often fail.  
It is closely related to preference-based objectives such as DPO~\citep{rafailov2023direct}, which also regress log-probability ratios toward externally derived preference signals.

\noindent \textbf{Interpretation.}
By minimizing Eq.~\ref{eq:mse_objective}, the model learns to align its scaled log-probability ratio with the advantage of an optimal policy:
\begin{equation}
\label{eq:log_ratio_equals_advantage}
\beta\,\log\tfrac{\pi_\theta(a_t|s_t,I)}{\pi_{\text{ref}}(a_t|s_t,I)}\approx A^*(s_t,a_t,I).
\end{equation}
This defines a supervised regression target, where the actor predicts the log-ratio and the critic (or return estimator) provides the advantage signal. Empirically, we observe that this regression-based formulation outperforms standard likelihood-ratio methods (e.g., PPO). We attribute this to the superior numerical stability of optimizing in log-space, which avoids the high-variance gradients associated with probability ratios, and the problem's off-policy nature, which allows for greater sample efficiency by effectively leveraging historical high-quality trajectories in the experience pool.

\noindent \textbf{Why not direct KL Divergence to measure the distance between $\pi_{\theta}$ and $\pi^*$?} \\
While KL divergence is a common measure of distance between policies in reinforcement learning, directly using it to optimize $\pi_\theta$ toward $\pi^*$ presents significant challenges. Specifically, optimizing
\(
\arg\max_\theta\mathbb{E}_{s\sim d(s), a\sim\pi^*(a|s,I)}[\log\pi_\theta]
\)
imposes strong restrictions on the data distribution. It would typically require the training data to conform to $\pi_{\text{ref}}$ for effective on-policy learning, or otherwise necessitate complex off-policy corrections.  
The MSE objective derived above (Eq.~\ref{eq:mse_objective}), by contrast, naturally supports off-policy data, aligning better with our goal of leveraging diverse datasets without stringent on-policy sampling requirements. 

Then, to implement this supervised objective in practice, we first require an empirical estimate for the target advantage $A^*(s_t,a_t,I)$, followed by a gradient-based update to the actor network.

\begin{algorithm}[t!]
\caption{\textsc{MiRA-RL}: Actor Optimization with Potential-Based Reward Shaping (Inner Loop)}
\label{alg:mira-rl}
\begin{algorithmic}[1]
\Require Training batch $\mathcal{D}_{\text{train}}$; value critic $V_\phi$; potential critic $P_\psi$; current policy $\pi_\theta$; reference policy $\pi_{\mathrm{ref}}$; discount $\gamma$; temperature $\beta$; mixing $\lambda$; learning rates $\eta$.
\vspace{0.25em}

\State \textbf{Initialize} local optimization step counter $t \gets 0$
\While{optimization not converged}
  \State \textbf{Phase A: Critic updates}
  \For{$i=1$ to $N_c$}
    \State Sample minibatch $\mathcal{B}_c \subset \mathcal{D}_{\text{train}}$
    \State Update Value Critic $\mathcal{L}_V(\phi)$ via Eq.~\eqref{eq:critic_value}
    \State Update Potential Critic $\mathcal{L}_P(\psi)$ via Eq.~\eqref{eq:potential_regression}
    \State $\phi \gets \phi - \eta_V \nabla \mathcal{L}_V$, \quad $\psi \gets \psi - \eta_P \nabla \mathcal{L}_P$
  \EndFor

  \State \textbf{Phase B: Reward Shaping \& Returns}
  \State Sample minibatch $\mathcal{B}_a \subset \mathcal{D}_{\text{train}}$
  \For{each trajectory $\tau \in \mathcal{B}_a$}
    \State Calculate potential diffs: $r'_t \gets r_t + \alpha(P_\psi(s_{t+1}, g) - P_\psi(s_t, g))$
    \State Compute returns $G_t$ using shaped rewards $r'_t$
  \EndFor

  \State \textbf{Phase C: Actor Update}
  \For{each transition in $\mathcal{B}_a$}
    \State Compute Advantage $A_t^{\text{shaped}}$ (Eq.~\ref{eq:V4A}) mixing Monte-Carlo and Critic
    \State Compute log-ratio $\rho_t$ against $\pi_{\mathrm{ref}}$
  \EndFor
  \State Update $\theta \gets \theta - \eta_\pi \nabla_\theta \mathcal{L}_\pi(\theta)$ (minimizing Eq.~\ref{eq:mse_objective})
\EndWhile
\State \Return Updated parameters $(\pi_\theta, V_\phi, P_\psi)$

\end{algorithmic}
\end{algorithm}

\paragraph{Advantage Target Estimation.}
We compute the advantage target, $A_t^{\text{shaped}}$, using a doubly-robust estimator that mixes a 1-step TD-error term with a full Monte-Carlo return. This balances the low variance of TD learning with the high-variance, unbiased signal of long-horizon returns:
\begin{equation}
\label{eq:V4A}
A_t^{\text{shaped}}
=
\lambda
\underbrace{\big(
r'_t + \gamma V_\phi(s',g) - V_\phi(s,g)
\big)}_{\text{1-step TD Error}}
+
(1-\lambda)
\underbrace{\big(
G_t - V_\phi(s,g)
\big)}_{\text{MC Advantage}},
\end{equation}
with the full Monte-Carlo return $G_t$ defined as:
\begin{equation}
G_t = \sum_{u=t}^{T} \gamma^{u-t} r'_u .
\end{equation}
This $A_t^{\text{shaped}}$ value provides the dense, shaped signal that our policy will be regressed against.

\paragraph{Policy Update Implementation} 

Given the regression objective in Eq.~\ref{eq:mse_objective}, the corresponding policy loss can be written as:
\begin{equation}
\label{eq:actor_loss_final}
\mathcal{L}_\pi(\theta)
=
\mathbb{E}_{\nu}\!
\left[
\left(
\beta\,\log\tfrac{\pi_\theta(a\mid s,I)}{\pi_{\mathrm{ref}}(a\mid s,I)}
-
A_t^{\text{shaped}}
\right)^{2}
\right],
\end{equation}
where $\nu(s)$ denotes the distribution of experiences (e.g., replay or mixed expert data).  
Our method operates fully in the off-policy setting without requiring data from the current policy.
To gain a mechanistic understanding of the regression objective, we inspect its gradient:
\begin{align}
\nabla_\theta \mathcal{L}_\pi
&=
-\,2\beta\,
\mathbb{E}_{(s,a)\sim\nu}\!\Bigg[
\Big(
\underbrace{A_t^{\text{shaped}}}_{\text{update direction}}
-
\underbrace{\beta \,\log \tfrac{\pi_\theta(a\mid s,I)}{\pi_{\mathrm{ref}}(a\mid s,I)}}_{\text{KL divergence constraint}}
\Big)
\;
\nabla_\theta \log \pi_\theta(a\mid s,I)
\Bigg].
\label{eq:actor_gradient_annotated}
\end{align}

\noindent
\textbf{Interpretation.}
This gradient update reveals three important mechanisms:

\begin{itemize}
    \item \textbf{Advantage-Guided Update.}
    When $A(s,a,I)>0$, the action $a$ is valuable, so its log-probability should increase.  
    The magnitude of the update scales with the advantage gap: the larger $A^*$, the more strongly $\pi_\theta$ shifts toward increasing the action likelihood.  
    Conversely, when $A(s,a,I)<0$, the update pushes the policy to reduce its probability for suboptimal actions.

    \item \textbf{KL-Constrained Regularization.}
    The term $\log\tfrac{\pi_\theta}{\pi_{\mathrm{ref}}}$ imposes a soft regularization that prevents excessive policy deviation.  
    When $\pi_\theta$ already assigns higher probability than $\pi_{\mathrm{ref}}$, the corrective signal $(A^*-\beta\log\tfrac{\pi_\theta}{\pi_{\mathrm{ref}}})$ is reduced, stabilizing updates and avoiding overshooting—analogous to a trust-region constraint.

\end{itemize}

Overall, this formulation can be viewed as a KL-regularized regression policy update, where the actor increases the log-probability of advantageous actions relative to the reference while suppressing those that yield negative advantages, achieving stable off-policy learning. This optimization procedure constitutes the \emph{inner loop} of our framework; in the following section, we detail the \emph{outer curriculum loop}, which iteratively refreshes the training distribution based on failure analysis to progressively patch the agent's weakness.

\subsection{Iterative Policy Refinement Cycle}
\label{sec:iterative_rl}

A key limitation of static, off-policy reinforcement learning is its vulnerability to local optima: once an agent saturates performance on a fixed task set, it ceases to acquire new skills and fails to generalize to novel task configurations~\citep{qi2024webrl}. To address this bottleneck, we propose an \textbf{iterative refinement framework} in which the agent's policy is periodically patched through a curriculum of failure-driven updates. This design transforms the training process into an \emph{online curriculum of offline RL phases}, where each iteration learns from the agent's most recent failures to continually expand its competence boundary.

\begin{figure}[t!]
    \centering
    \includegraphics[width=1.0\linewidth]{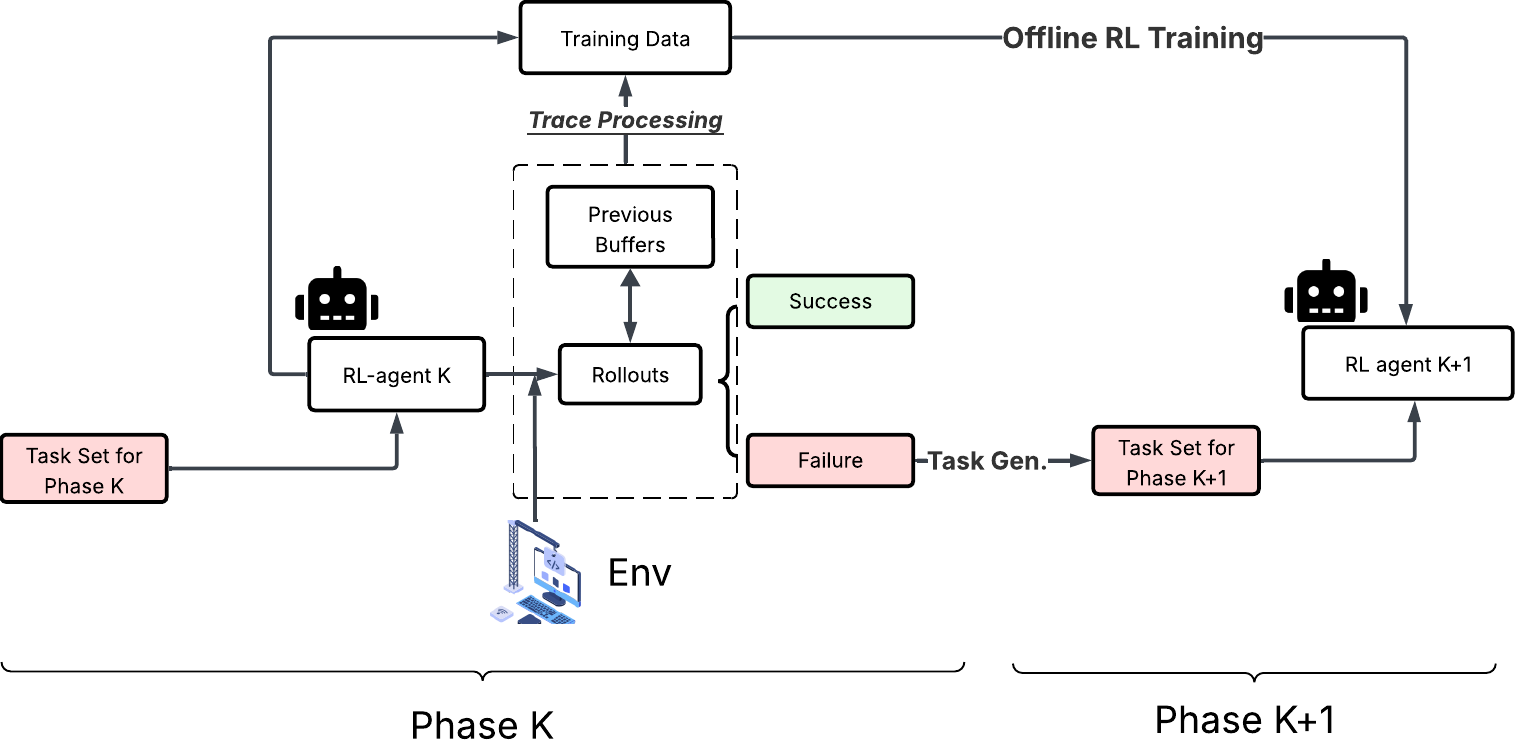}
    \caption{Online Curriculum Training Pipeline for Patching the Model. 
    The framework alternates between \emph{offline RL training} and \emph{online environment interaction}. Failed trajectories are analyzed to generate harder task distributions for subsequent phases.}
    \label{fig:curriculum}
\end{figure}

As illustrated in Figure~\ref{fig:curriculum}, the workflow proceeds cyclically from an existing policy $\text{RL-agent } K$ to a refined successor $\text{RL-agent } K\!+\!1$ via coordinated phases of environment interaction, offline policy updates, and curriculum adaptation. In each phase, the current policy interacts with the environment to collect trajectories, which are processed and aggregated into a training corpus. The agent is then refined through offline batched RL updates before being redeployed on a newly generated task distribution derived from previous failures. Over successive phases, this failure-driven curriculum gradually expands the task distribution while improving the agent's robustness and long-horizon reasoning ability. Additional implementation details—including rollout processing, perplexity filtering, task generation, and the full training loop—are provided in Appendix~\ref{apdx:iterative_rl}.

\section{Experiments and Results}

\subsection{Experimental Setup}

\paragraph{Benchmarks and Environments.} 
We evaluate the effectiveness of our proposed framework using WebArena-Lite \citep{liu2024visualagentbench}, a rigorously curated subset of the WebArena benchmark \citep{zhou2023webarena}. The evaluation suite comprises 165 tasks distributed across five distinct, real-world application domains: \textit{Shopping Admin} (35), \textit{Map} (26), \textit{Shopping} (45), \textit{Reddit} (19), and \textit{Gitlab} (30). We prioritize this refined subset over the original 812-task WebArena suite to avoid issues with task feasibility and unstable evaluation. Prior analyses show that many tasks in the full benchmark have underspecified goals or rely on unsupported backend functionality, introducing noise that obscures true agent performance. Using WebArena-Lite ensures that failures reflect agent reasoning rather than environment faults. The reduced task set also lowers evaluation time from over six hours to about 40 minutes, enabling the frequent validation needed for our iterative curriculum design.

\paragraph{Baselines and Model Architectures.} 
To establish a comprehensive performance landscape, we compare our methods across two axes: (i) inference-level enhancement via our subgoal-oriented planning framework (SGO) and (ii) training-level curriculum enhancement via MiRA.  
In the proprietary category, we evaluate GPT-4-Turbo, GPT-4o, and the Gemini-2.5 family (Flash and Pro). For open-source agents, we employ Llama-3.1 (8B) and Gemma-3 (12B) as foundational backbones. We contrast our method against standard Supervised Fine-Tuning (SFT) baselines, including advanced reinforcement learning approaches such as AWR \citep{peng2019advantage}, DigiRL \citep{bai2024digirl}, and the current state-of-the-art WebRL \citep{qi2024webrl}.

\paragraph{Training Protocol.}
To ensure a fair and rigorous comparison, we strictly control the initialization conditions for all learning-based methods. Both the baseline RL agents (WebRL, DigiRL) and our MiRA variants are initialized from the same SFT checkpoints and same SFT data set (Phase 0)—specifically, Llama-3.1-SFT and Gemma-3-SFT—before undergoing their respective reinforcement learning phases. The critic networks share the actor's backbone with the addition of randomly initialized value heads. For MiRA, the potential critic is pre-trained on the offline dataset to establish the initial shaping landscape before the online curriculum commences. Moreover, we reproduce the DigiRL framework within the WebArena-Lite environment. We retain the identical architectural components described in the original work, including the Advantage-Weighted Regression (AWR) optimization, both instruction-level and step-level value functions, and the experience replay buffer. To facilitate direct comparison, we adjust the data format to align with the Web-Nav standard and execute the training 6 rounds of online interaction. For WebRL, we adhere to its native phase-aware training curriculum. Note that for the Llama-3.1 WebRL baseline, we utilize the official open-source checkpoint available on Hugging Face; conversely, for Gemma-3, we train the agent from scratch following the same curriculum due to the absence of a pre-existing checkpoint. In contrast, the remaining baselines (such as SFT and standard AWR) undergo a ``one-go'' training protocol without iterative online refinement. Consequently, in our success rate (SR) evolution curves, these static baselines are visualized as dashed horizontal lines to serve as fixed performance references.

\subsection{Main Results}

\subsubsection{End-to-End Subgoal-Oriented Web Navigation Efficiency}

We present the main comparison results in Table~\ref{tab:model-comparison}. Our evaluation demonstrates that both the inference-level optimization (SGO) and the training-level curriculum (MiRA) yield substantial gains.

\begin{table}[t]
    \centering
    \footnotesize
    \caption{\textbf{Task Success Rate (SR) Comparison (WebArena-Lite).} Comparison of our models (Gemini-SGO and Gemma3 + MiRA) against proprietary and open-sourced baselines. Our models demonstrate superior performance in their respective categories. We report Pass@1 here to emphasize immediate task completion capability.}
    \label{tab:model-comparison}
    \renewcommand{\arraystretch}{1.2} 
    \setlength{\tabcolsep}{5pt}       

    \begin{tabular}{l c c c c c c c}
        \toprule
        \textbf{Models} & \textbf{\#Params} & \textbf{Reddit} & \textbf{Gitlab} & \textbf{CMS} & \textbf{Map} & \textbf{OSS} & \textbf{Avg. SR} \\
        \midrule

        \multicolumn{8}{c}{\textit{Proprietary LLMs}} \\
        \midrule
        GPT-4-Turbo & N/A & 10.5 & 16.7 & 14.3 & 36.7 & 13.3 & 17.6 \\
        GPT-4o & N/A & 10.5 & 10.0 & 20.0 & 20.0 & 11.1 & 13.9 \\
        Gemini-2.5-flash & N/A & 21.1 & 33.3 & 14.3 & 11.5 & 24.4 & 20.6 \\
        Gemini-2.5-pro & N/A & 31.6 & 30.0 & 34.3 & 15.4 & 13.3 & 23.0 \\
        \rowcolor{blue!5}
        \textbf{Gemini-2.5-pro-SGO (ours)} & \textbf{N/A} & \textbf{\underline{26.3}} & \textbf{\underline{50.0}} & \textbf{\underline{37.1}} & \textbf{\underline{23.1}} & \textbf{\underline{28.9}} & \textbf{\underline{32.1}} \\

        \midrule

        \multicolumn{8}{c}{\textit{Small Open-sourced LLMs}} \\
        \midrule
        AutoWebGLM (Lai et al., 2024) & 6B & 9.4 & 15.0 & 28.6 & 24.8 & 17.1 & 18.2 \\
        GLM-4-Chat (GLM et al., 2024) & 9B & 5.3 & 10.0 & 6.7 & 3.3 & 6.7 & 6.1 \\
        GLM-4 + SFT (BC) & 9B & 47.4 & 13.3 & 31.4 & 23.3 & 13.3 & 22.4 \\
        Llama3.1 + SFT (BC) & 8B & 36.8 & 6.7 & 20.0 & 33.3 & 17.8 & 20.6 \\
        Llama3.1 + AWR (Peng et al., 2019) & 8B & 57.9 & 26.7 & 31.4 & 26.7 & 17.8 & 28.5 \\
        Llama3.1 + DigiRL (Bai et al., 2024) & 8B & 57.9 & 26.7 & 37.1 & 33.3 & 17.8 & 30.3 \\
        Llama3.1 + WebRL (Qi et al., 2025) & 8B & 68.4 & 36.7 & 51.4 & 15.4 & \textbf{\underline{40.0}} & 38.8 \\
        \midrule
        Gemma3 + SFT (BC) & 12B & 52.6 & 40.0 & 42.9 & 23.1 & 17.8 & 30.9 \\
        Gemma3 + DigiRL & 12B & 52.6 & 46.7 & 37.1 & \textbf{\underline{34.6}} & 20.0 & 33.3 \\
        Gemma3 + WebRL & 12B & 68.4 & 43.3 & 40.0 & 30.8 & 20.0 & 35.1 \\
        \rowcolor{red!5}
        \textbf{Gemma3 + MiRA (ours)} & \textbf{12B} & \textbf{\underline{73.7}} & \textbf{\underline{56.7}} & \textbf{\underline{54.3}} & 30.8 & 28.9 & \textbf{\underline{43.0}} \\
        \bottomrule
    \end{tabular}
\end{table}

Comparison with open-source models reveals that Gemma3 + MiRA (12B) achieves the highest average success rate of \textbf{43.0\%}, substantially outperforming the WebRL baseline (35.1\%) and DigiRL (33.3\%). This margin validates the efficacy of our potential-based reward shaping, particularly in complex domains such as \textit{Gitlab} (56.7\%) and \textit{Shopping Admin} (54.3\%), where the model must adhere to strict procedural dependencies that purely sparse-reward methods often fail to capture. Similarly, in the proprietary regime, our Gemini-SGO method improves upon the base Gemini-2.5-pro model by nearly 10 percentage points (23.0\% → 32.1\%), confirming that our subgoal-oriented optimization generalizes across model architectures.

\paragraph{Complementary Benefits of Offline Refinement and Online Planning.}
The strong performance of both Gemma3-MiRA and Gemini-SGO highlights their distinct contributions across the agent's lifecycle. The offline RL phase (MiRA) allows the model to internalize subgoal dependencies into its weights, effectively ``compiling'' planning into intuition for common web navigations. The inference-time mechanism (SGO), conversely, serves as a runtime guardrail that is independent of the training phase. As detailed in Section~\ref{subsec:inference_enhancement} (and illustrated in Fig.~\ref{fig:gemini-thinking-trace}), the Dynamic Milestoning framework enables the agent to verify state transitions before proceeding. This real-time milestoning offers several key advantages: it (i) avoids blind exploration by checking whether a subgoal is achieved, (ii) enables corrective feedback mid-trajectory if progress stalls or deviates, and (iii) reduces planning uncertainty in long-horizon tasks by breaking workflows into verifiable micro-milestones. The combined effect is crucial for long-horizon web interactions: the trained policy provides strong priors to navigate efficiently, while the independent inference-time LLM catches and corrects deviations in real-time, effectively preventing the error propagation typical of purely reactive agents.

Complementing the efficiency analysis, Table~\ref{tab:gemini_failures} highlights the qualitative shift in agent behavior. Notably, the SGO framework drives a significant reduction in \textit{Stuck Midway} errors, dropping to \textbf{39.87\%} compared to the base Gemini 2.5 Pro (48.41\%) and Flash (45.12\%). This reduction directly validates the utility of the Dynamic Milestoning mechanism in helping the agent break out of local optima and navigational dead-ends. While this increased throughput results in a marginal rise in \textit{Wrong Terminations} (9.52\% $\rightarrow$ 12.03\%)—as the agent is now capable of reaching terminal states that were previously inaccessible—it maintains a low rate of \textit{Fail Attempts} (6.96\%) comparable to the Pro baseline, confirming that the dynamic compute allocation improves trajectory completion without sacrificing basic instruction adherence.

\begin{table}[h]
\centering
\small
\caption{Failure distribution ratio in all traces (\%) across Gemini model variants. The SGO framework significantly reduces navigation stalls compared to the base models.}
\label{tab:gemini_failures}
\begin{tabular}{lccc}
\toprule
\textbf{Failure Mode} & \textbf{Gemini 2.5 Flash} & \textbf{Gemini 2.5 Pro} & \textbf{Gemini 2.5 SGO} \\
\midrule
Stuck Midway & 45.12 & 48.41 & \textbf{39.87} \\
Wrong Termination & 10.98 & 9.52 & 12.03 \\
Fail Attempt & 12.20 & 6.35 & 6.96 \\
Others & 10.37 & 11.11 & 8.86 \\
\bottomrule
\end{tabular}
\end{table}

\paragraph{Inference Efficiency and Compute Trade-offs.}
Deploying such explicit reasoning at inference time, however, introduces a latency trade-off. Our analysis of ``Thinking Budgets'' (Figure~\ref{fig:thinking_tradeoff}) reveals that while naively increasing the static computation budget per step improves success rates—peaking at $\approx 32.5\%$ with 8192 tokens—it drastically increases inference latency to over 19 seconds per step, yielding diminishing returns. The \emph{Auto (Dynamic)} strategy employed by our Gemini-SGO agent effectively resolves this optimization problem. By adaptively allocating compute only when milestone verification is ambiguous, the dynamic model achieves success rates statistically comparable to the maximum static budget ($\approx 32.1\%$) but with significantly lower latency. This confirms that our framework succeeds not merely by scaling inference compute, but by intelligently shifting the burden of planning between the amortized cost of offline training and the targeted application of online reasoning. In the following sections, we will delve deeper into the MiRA training phases, domain-specific effects, ablations on subgoal modules and subgoal completion dynamics.

\begin{figure*}[t]
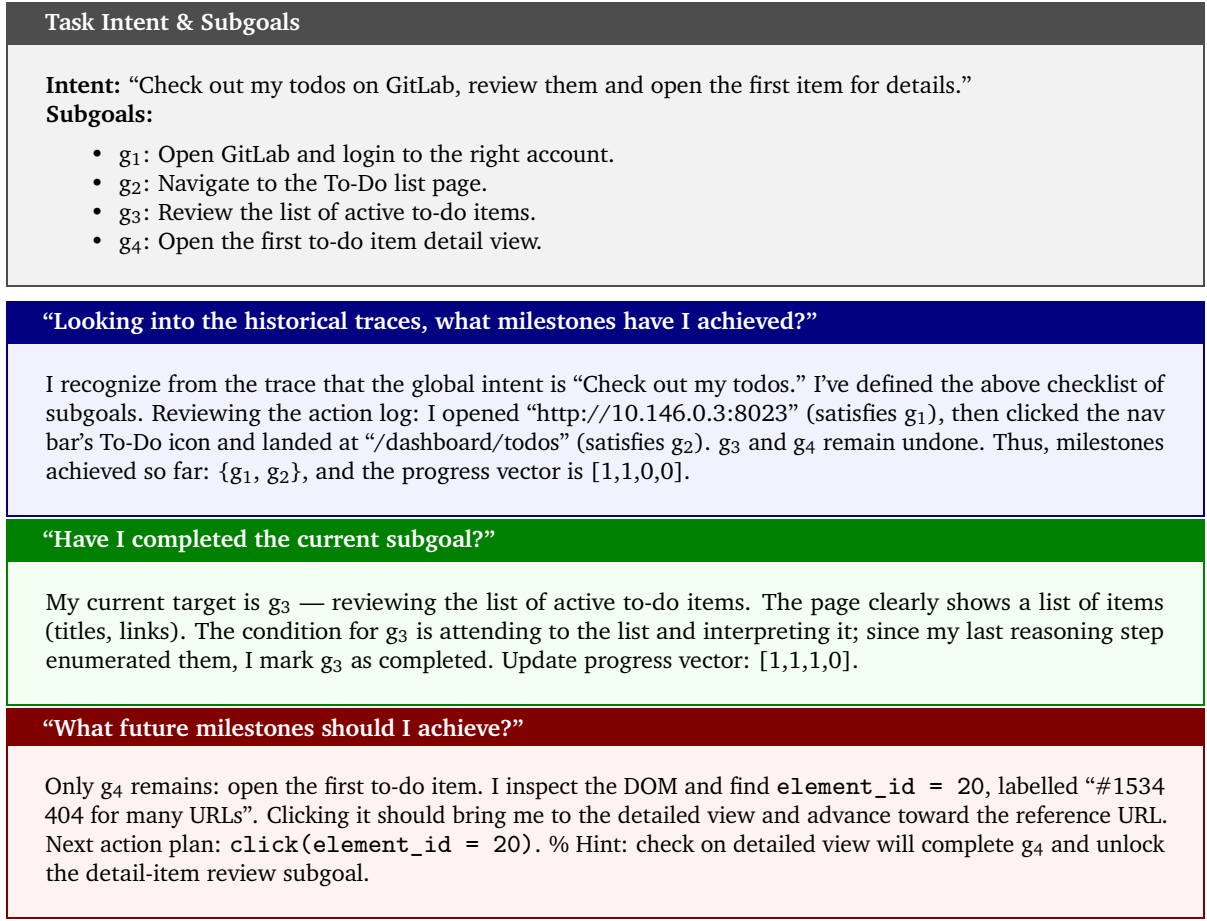

\centering
\begin{minipage}{0.95\textwidth}
\footnotesize

\begin{tcolorbox}[
    colback=gray!10,
    colframe=gray!60!black,
    sharp corners,
    boxrule=0.8pt,
    enhanced,
    left=4mm,right=4mm,top=3mm,bottom=3mm,
    title=\textbf{Task Intent \& Subgoals},
    fonttitle=\bfseries
]
\textbf{Intent:} ``Check out my todos on GitLab, review them and open the first item for details.'' \\
\textbf{Subgoals:}
\begin{itemize}
  \item g\textsubscript{1}: Open GitLab and login to the right account.
  \item g\textsubscript{2}: Navigate to the To-Do list page.
  \item g\textsubscript{3}: Review the list of active to-do items.
  \item g\textsubscript{4}: Open the first to-do item detail view.
\end{itemize}
\end{tcolorbox}

\begin{tcolorbox}[
    colback=blue!5,
    colframe=blue!50!black,
    sharp corners,
    boxrule=0.8pt,
    enhanced,
    left=4mm,right=4mm,top=3mm,bottom=3mm,
    title=\textbf{``Looking into the historical traces, what milestones have I achieved?''},
    fonttitle=\bfseries
]
I recognize from the trace that the global intent is ``Check out my todos.'' I've defined the above checklist of subgoals. Reviewing the action log: I opened ``http://10.146.0.3:8023'' (satisfies g\textsubscript{1}), then clicked the nav bar's To-Do icon and landed at ``/dashboard/todos'' (satisfies g\textsubscript{2}). g\textsubscript{3} and g\textsubscript{4} remain undone. Thus, milestones achieved so far: \{g\textsubscript{1}, g\textsubscript{2}\}, and the progress vector is [1,1,0,0].
\end{tcolorbox}
\vspace{-10pt}

\begin{tcolorbox}[
    colback=green!5,
    colframe=green!50!black,
    sharp corners,
    boxrule=0.8pt,
    enhanced,
    left=4mm,right=4mm,top=3mm,bottom=3mm,
    title=\textbf{``Have I completed the current subgoal?''},
    fonttitle=\bfseries
]
My current target is g\textsubscript{3} — reviewing the list of active to-do items. The page clearly shows a list of items (titles, links). The condition for g\textsubscript{3} is attending to the list and interpreting it; since my last reasoning step enumerated them, I mark g\textsubscript{3} as completed. Update progress vector: [1,1,1,0].
\end{tcolorbox}
\vspace{-10pt}

\begin{tcolorbox}[
    colback=red!5,
    colframe=red!50!black,
    sharp corners,
    boxrule=0.8pt,
    enhanced,
    left=4mm,right=4mm,top=3mm,bottom=3mm,
    title=\textbf{``What future milestones should I achieve?''},
    fonttitle=\bfseries
]
Only g\textsubscript{4} remains: open the first to-do item. I inspect the DOM and find \texttt{element\_id = 20}, labelled ``\#1534 404 for many URLs''. Clicking it should bring me to the detailed view and advance toward the reference URL.  
Next action plan: \texttt{click(element\_id = 20)}. \% Hint: check on detailed view will complete g\textsubscript{4} and unlock the detail-item review subgoal.
\end{tcolorbox}

\end{minipage}
\captionsetup{justification=centering}
\caption{\textbf{Showcase of Introspective planning of the Gemini-SGO-based Web Agent.}}
\label{fig:gemini-thinking-trace}
\end{figure*}

\subsubsection{Multi-Phase MiRA Performance and Domain-Level Trends}

To better understand how MiRA improves through iterative curriculum refinement, we visualize both the overall success rate (SR) across training phases and the per-domain trends on WebArena-Lite.

\begin{figure}[t!]
    \centering
    \begin{subfigure}[t]{0.47\linewidth}
        \centering
        \includegraphics[width=\linewidth]{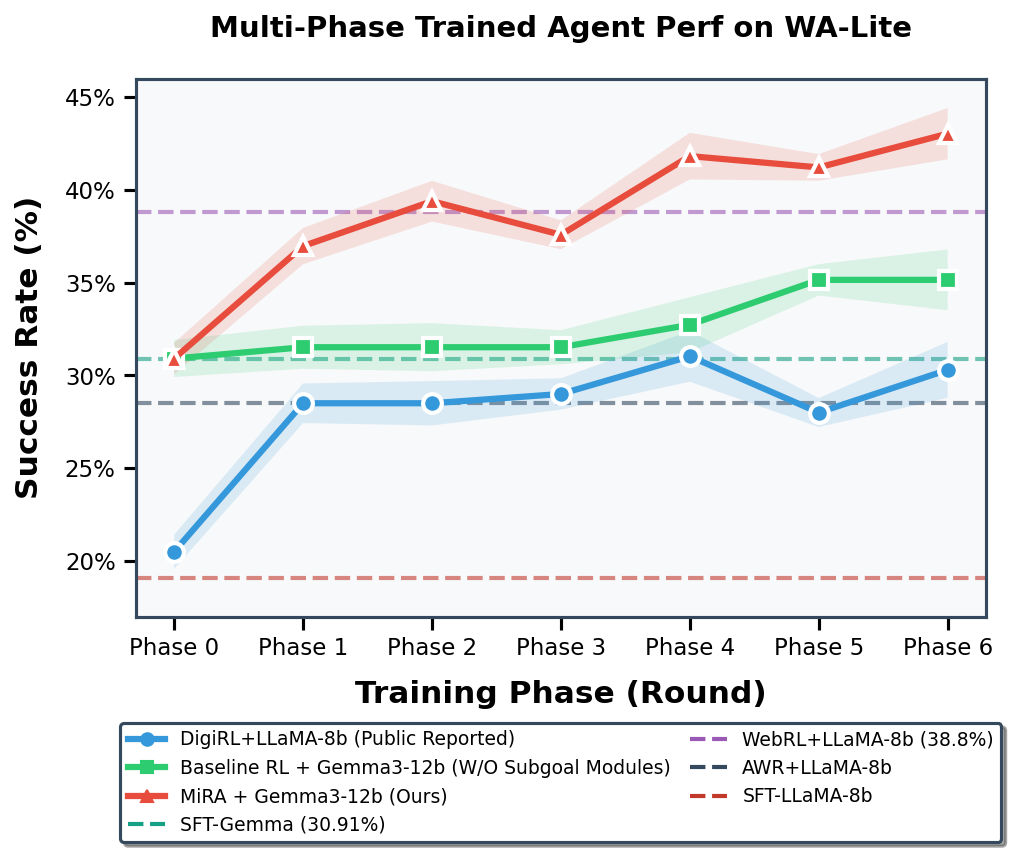}
        \caption{\textbf{Overall SR across Phases.} 
        MiRA (red) continually improves with each refinement phase, outperforming both standard RL (green) and previous methods such as WebRL and SFT.}
        \label{fig:sr_overall}
    \end{subfigure}
    \hfill
    \begin{subfigure}[t]{0.47\linewidth}
        \centering
        \includegraphics[width=\linewidth]{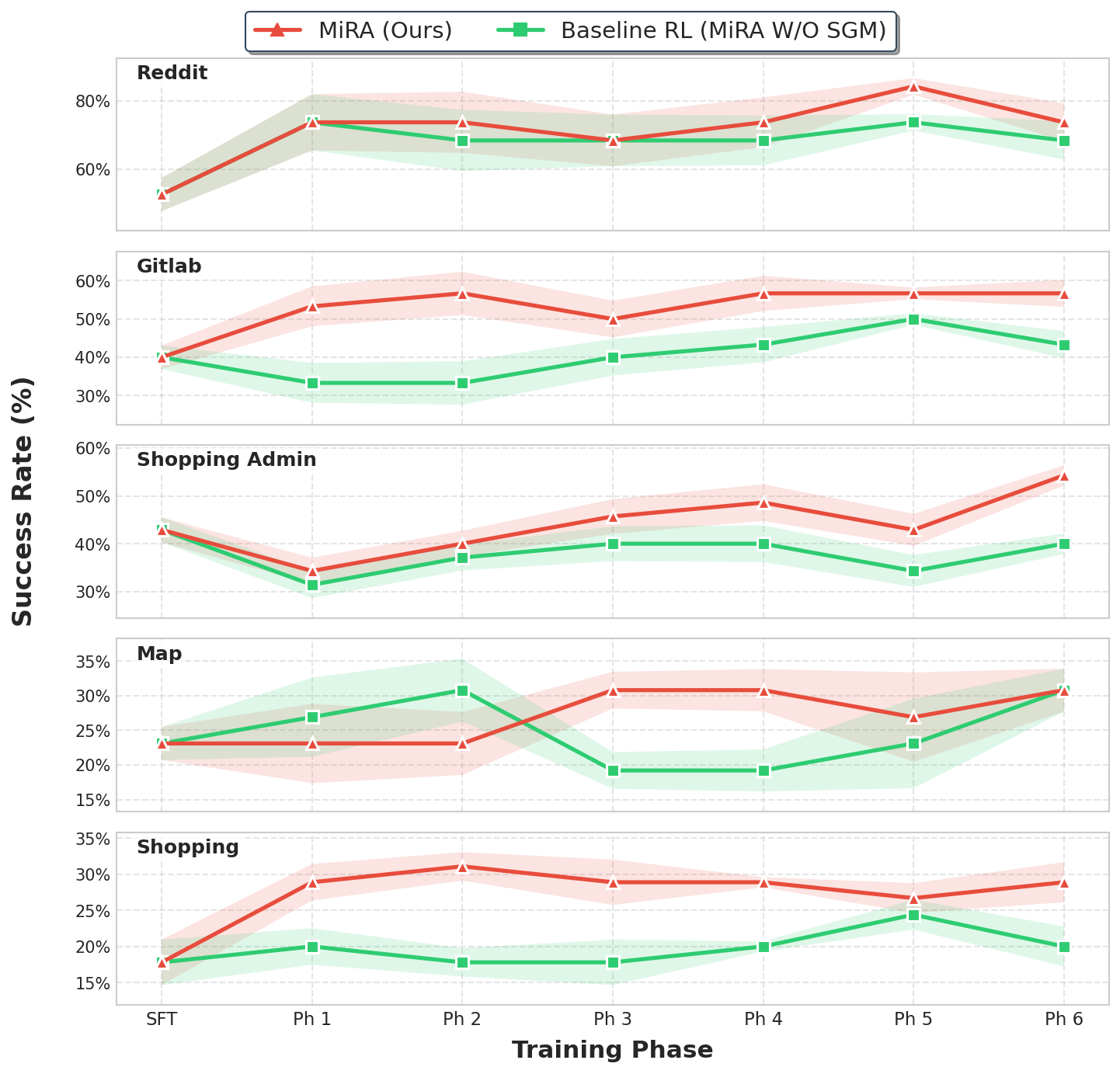}
        \caption{\textbf{Per-Site SR Trends.} 
        Domain-wise improvements of MiRA compared to baseline RL, 
        showing consistent gains across Reddit, Gitlab, CMS, Map, and Shopping Admin.}
        \label{fig:sr_site}
    \end{subfigure}
    \caption{\textbf{Multi-Phase and Domain-Level Success Rate Trends.}
    The left panel tracks the overall training progress over curriculum phases,  while the right panel breaks down SR improvements per website domain. 
    MiRA demonstrates stable, cumulative gains and broader generalization across domains. Results are averaged from 5 runs.}
    \label{fig:sr_combined}
\end{figure}

Across all training rounds, MiRA steadily improves from an initial SR of \textasciitilde31\% to \textbf{43\%}, while the baseline RL model without subgoal shaping saturates near 35\%. Per-domain analysis shows consistent upward trends, particularly on Gitlab and Shopping Admin, where subgoal-based shaping provides denser intermediate rewards for complex multi-step interactions. This confirms that MiRA's subgoal-oriented design not only enhances overall learning efficiency but also stabilizes long-horizon behaviors across diverse web environments. 

For each WebArena-Lite task, we evaluate the performance of:
\begin{itemize}
    \item \textbf{MiRA w/o Subgoal Modules} (Baseline RL),
    \item \textbf{MiRA} (Ours).
\end{itemize}

For a given task with $c$ successful rollouts out of $n$ total attempts, we compute \textit{pass@k} for $k \in \{1, 2, 4, 8\}$ using the standard unbiased estimator:
\begin{equation}
    \text{pass}@k = 1 - \frac{\binom{n-c}{k}}{\binom{n}{k}},
\end{equation}
where the calculation is valid for $k \le n$. This metric estimates the probability that at least one solution is correct given $k$ samples. We report the average \textit{pass@k} across all tasks for both the MiRA w/o Subgoal Modules and the full MiRA models to demonstrate the consistency improvements gained from our hierarchical approach. Results are reported in Figure~\ref{fig:pass_k_analysis}, MiRA consistently outperforms the baseline across all sample budgets ($k$). Notably, the performance gap widens significantly in Phase 2 ($+7.9\%$ at Pass@2), demonstrating faster convergence. While the baseline narrows the gap by Phase 6 due to curriculum effects, MiRA maintains a strong lead ($+7.5\%$ at Pass@8).

\subsubsection{Ablation Study on MiRA}

A natural question arises: \emph{Is the full MiRA framework truly necessary, or can simpler alternatives achieve comparable performance?} To answer this, we systematically ablate key components and compare against standard RL baselines. Figure~\ref{fig:ablation_components} summarizes the results across training phases.

\paragraph{Baselines and Ablated Variants.}
We compare five configurations: (1) MiRA (Full): the complete framework with MSE-based policy regression (Eq.~\ref{eq:mse_objective}), potential critic for dense shaping, and doubly-robust advantage estimation (Eq.~\ref{eq:V4A}); (2) MiRA (w/o PC): removes the potential critic $P_\psi$, relying solely on sparse outcome rewards; (3) MiRA (w.\ KL): replaces the MSE policy objective with KL-divergence minimization (Eq.~\ref{eq:kl_objective}); (4) MiRA (w/o Doubly Adv.): uses only the 1-step TD error for advantage estimation ($\lambda=1$ in Eq.~\ref{eq:V4A}), removing the Monte-Carlo return mixing; (5) AWR: Advantage Weighted Regression~\citep{peng2019advantage}, a simpler off-policy baseline.

\paragraph{Why MSE over KL Divergence?}
A key design choice in MiRA is the use of MSE regression on log-probability ratios (Eq.~\ref{eq:mse_objective}) rather than directly minimizing KL divergence between $\pi_\theta$ and the optimal policy $\pi^*$. When KL divergence is used, the optimization objective becomes:
\begin{equation}
\label{eq:kl_objective}
\arg\min_{\theta} \, \mathbb{E}_{s \sim d(s)} \left[ D_{\mathrm{KL}}\left( \pi^*(\cdot|s,I) \,\|\, \pi_\theta(\cdot|s,I) \right) \right]
= \arg\max_{\theta} \, \mathbb{E}_{s \sim d(s), a \sim \pi^*(a|s,I)} \left[ \log \pi_\theta(a|s,I) \right].
\end{equation}
This formulation imposes a critical constraint\footnote{see Appendix~\ref{sec:apdx_mse_vs_kl} for detailed formulation}: the training data must be sampled from $\pi^* \propto \pi_{\mathrm{ref}} \exp\!\left(\tfrac{1}{\beta}A^*\right)$. In practice, this means actions must be drawn from the reference policy $\pi_{\mathrm{ref}}$, severely limiting the ability to leverage diverse off-policy data from replay buffers. Furthermore, Eq.~\ref{eq:kl_objective} can only \emph{increase} the probability of sampled actions---it cannot explicitly \emph{decrease} the probability of actions with negative advantage. When beneficial actions are rare under $\pi_{\mathrm{ref}}$, the KL objective may paradoxically reinforce suboptimal behaviors.


\begin{figure*}[t]
    \centering
    \begin{subfigure}[b]{0.5\textwidth}
        \centering
        \includegraphics[width=0.9\linewidth]{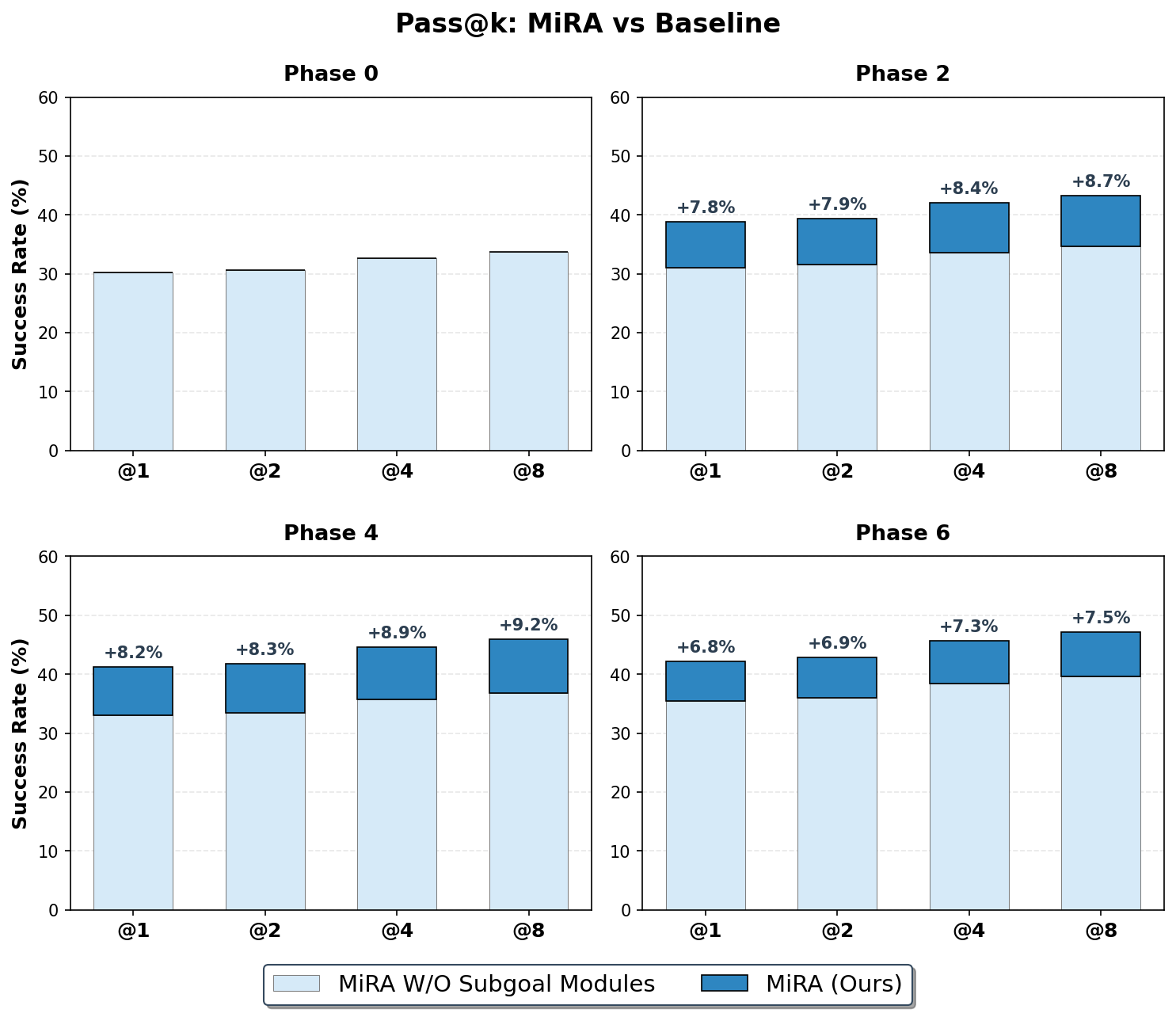}
        \caption{\textbf{Pass@$k$ Scaling.} Performance comparison across varying sample budgets ($k \in \{1, 2, 4, 8\}$) after different training stages.}
        \label{fig:pass_k_analysis}
    \end{subfigure}
    \hfill
    \begin{subfigure}[b]{0.48\textwidth}
        \centering
        \includegraphics[width=\linewidth]{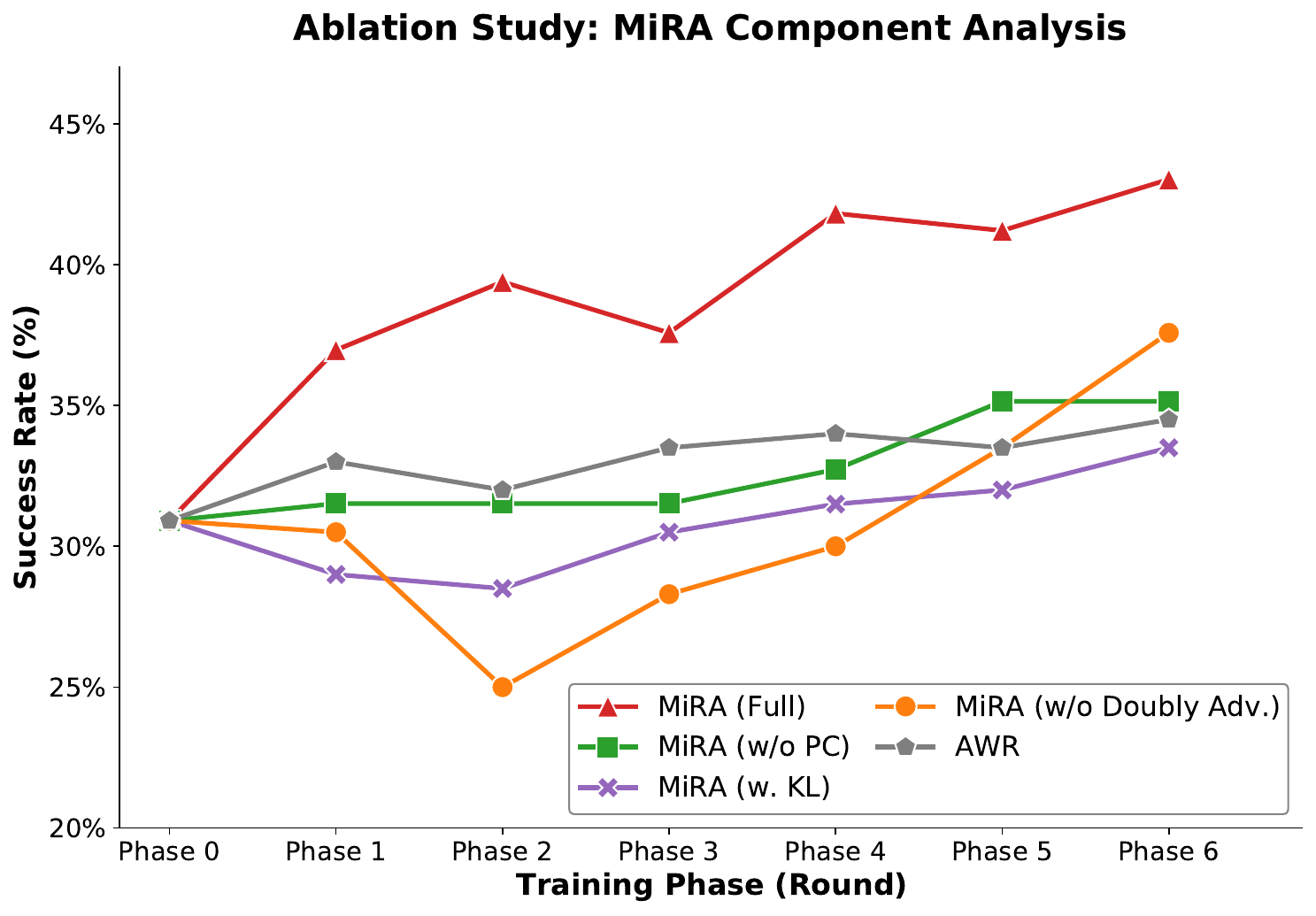}
        \caption{\textbf{Component Ablation.} Impact of removing the potential critic (PC), doubly-robust estimation, and MSE updates. All methods start from the same SFT checkpoint ($\text{SR} \approx 31\%$) and each result stands for the average number from 3 trials.}
        \label{fig:ablation_components}
    \end{subfigure}
    
    \caption{\textbf{Robustness and Component Analysis.} \textbf{(a)} MiRA consistently outperforms the baseline across all sample budgets ($k$). \textbf{(b)} Ablation results confirm that removing key components degrades success rates toward the SFT baseline ($\approx 31\%$), validating the necessity of the full MiRA framework.}
    \label{fig:combined_analysis}
    \vspace{-5pt}
\end{figure*}

In contrast, the MSE objective (Eq.~\ref{eq:mse_objective}) operates on any data distribution $\nu$ and directly regresses the log-ratio toward the advantage signal, enabling both upweighting of good actions and downweighting of bad ones. The purple curve (\textbf{MiRA w.\ KL}) in Figure~\ref{fig:ablation_components} confirms this analysis: it initially drops below the SFT baseline and recovers slowly, achieving only about $33\%$ by Phase~6---nearly $10\%$ below the full method.

\paragraph{The Critical Role of Doubly-Robust Advantage Estimation.}
The orange curve (\textbf{MiRA w/o Doubly Adv.}) reveals a striking failure mode: performance \emph{collapses} to around $25\%$ in early phases before gradually recovering. This occurs because the value critic $V_\phi$ is poorly calibrated at the start of training. Using only the 1-step TD error $r'_t + \gamma V_\phi(s_{t+1}) - V_\phi(s_t)$ propagates this bias directly into the advantage estimates, causing the policy to optimize toward incorrect targets. By Phase~6, as the critic improves, performance recovers to over $37\%$---but the early damage delays convergence significantly.

The doubly-robust estimator (Eq.~\ref{eq:V4A}) mitigates this by mixing TD estimates with Monte-Carlo returns $G_t$. The MC component provides unbiased (though higher variance) gradient signals that anchor learning even when the critic is unreliable, preventing the catastrophic early-phase degradation.

\paragraph{Dense Shaping via Potential Critic.}
The green curve (\textbf{MiRA w/o PC}) demonstrates the sparse reward problem: without the potential critic's dense shaping signal, learning plateaus at $\approx35\%$. The agent cannot effectively credit intermediate actions for eventual success, becoming trapped in local optima. The gap between the full method and this ablation confirms that subgoal-based potential shaping is essential for bridging the credit assignment gap in long-horizon web tasks.

\paragraph{Summary.}
The ablation results demonstrate that each component addresses a distinct failure mode: the MSE objective enables off-policy learning and bidirectional probability adjustment; the doubly-robust estimator prevents early-phase collapse from critic bias; and the potential critic provides the dense supervision necessary for long-horizon credit assignment. Removing any single component results in substantial performance degradation, validating the necessity of the full MiRA framework. Addtional parameter-sensitive abaltion studies can be found in Appendix~\ref{sec:apdx_ab}.

\subsubsection{Subgoal Completion Dynamics}

We further analyze the temporal evolution of subgoal completion to disentangle the agent's learning progress. Figure~\ref{fig:subgoal-pattern} visualizes the average completion probability for each subgoal across the episode horizon.

In Phase 0, the probability mass is heavily concentrated in the bottom-left region, forming a rigid vertical band over the first two subgoals (indices 0 and 1). This pattern indicates a severe \emph{early-stage stagnation}: the agent successfully initiates the task but fails to bridge the transition to intermediate objectives, effectively consuming its entire time budget in a local optimum without downstream progress. However, as training advances to Phase 6, we observe a decisive behavioral phase transition. The probability density shifts from this static vertical column to a structured \emph{diagonal gradient} extending from the top-left to the bottom-right. This strictly monotonic frontier demonstrates that the agent has acquired \emph{sequential fluency}: it no longer loiters on initial steps but efficiently chains subgoals in lockstep with the episode timeline. The emergence of this diagonal alignment confirms that MiRA has successfully broken the initial bottleneck, transforming fragmented, short-horizon attempts into coherent, long-horizon trajectories.

\begin{figure}[t!]
    \centering
    \includegraphics[width=1.0\linewidth]{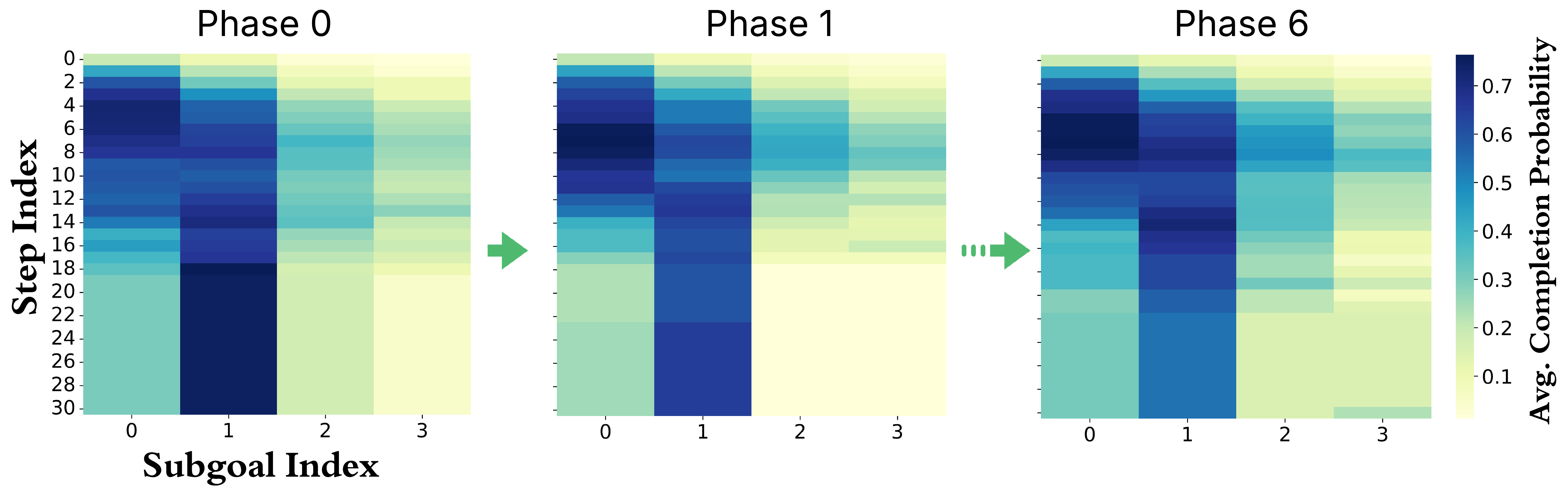}
    \caption{\textbf{Subgoal Completion Pattern.} 
    Average subgoal completion probabilities across time steps for consecutive training rounds (Phase 0 $\rightarrow$ Phase 1 $\rightarrow$ Phase 6). 
    Phase 0 shows stagnation at Subgoal 1, while Phase 6 exhibits a diagonal gradient, indicating that the agent has learned to sequentially chain subgoals over the episode horizon.}
    \label{fig:subgoal-pattern}
\end{figure}

\subsubsection{Result Analysis: Resolving the Planning Bottleneck}

To quantify the impact of MiRA on the agent's fundamental planning capabilities, we applied our automated failure analyzer to the post-training trajectories. Figure~\ref{fig:mira_failure_dist} presents the comparative distribution of failure modes across base LLMs, SFT baselines, the \textsc{WebRL} baseline, and our MiRA agent.

\begin{figure*}[t]
    \centering
    \includegraphics[width=1.0\linewidth]{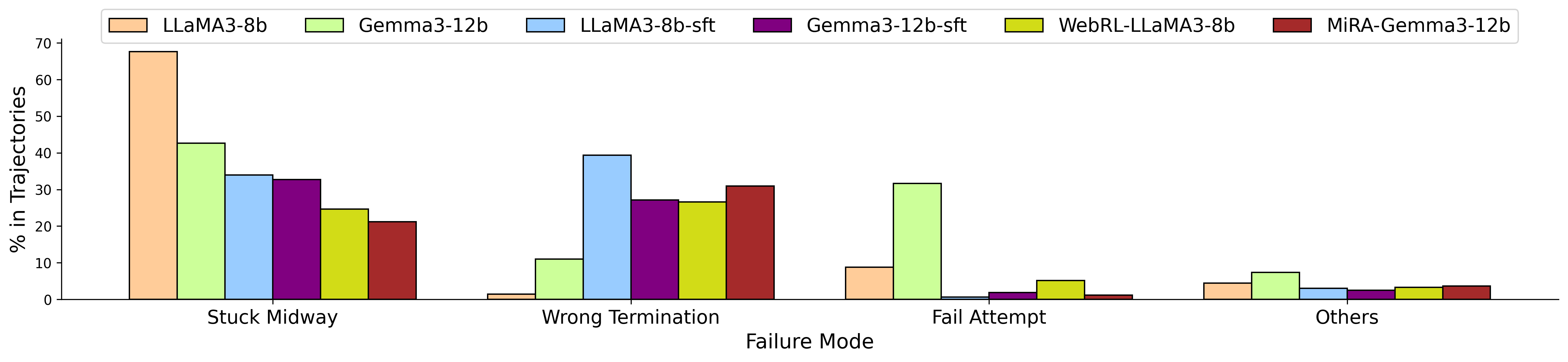}
    \caption{\textbf{Shift in Failure Distribution Post-Training.} Comparison of failure modes across Base, SFT, and RL-tuned models. MiRA (dark red) significantly reduces the ``Stuck Midway'' error rate compared to SFT and standard WebRL baselines, indicating that the potential-based curriculum successfully helps the agent break out of local optima and navigation loops.}
    \label{fig:mira_failure_dist}
\end{figure*}

The most significant finding is the drastic reduction in \emph{Get Stuck Midway} errors. As shown in Figure~\ref{fig:mira_failure_dist}, base models (e.g., LLaMA3-8b) and even their SFT counterparts suffer heavily from navigation loops, with Gemma3-12b-SFT exhibiting a stuck rate of $\approx 33\%$. In contrast, MiRA reduces this category to $\approx 21\%$, outperforming both the SFT baseline and the competitive \textsc{WebRL} method ($\approx 25\%$). This reduction serves as strong empirical evidence that our potential-based reward shaping and curriculum strategy effectively smoothen the optimization landscape. By providing dense subgoal signals, MiRA enables the agent to recover from repetitive states where standard policies typically stall, effectively ``pushing'' the agent through long-horizon dependencies.

Interestingly, while MiRA minimizes navigation stalls, we observe a relative increase in \emph{Wrong Termination} errors (rising to $\approx 31\%$). We note that this phenomenon is consistent across all trained agents: since successful termination sequences are highly rewarded during training, optimized policies naturally develop a bias toward attempting termination actions even without full confidence. Furthermore, the rate of wrong termination is compounded by the AutoRater (LLM-as-Judge), which judges success based on reaching the correct terminal page, often failing to verify the semantic details of the final answer. This shift, however, represents a clear progression in capability. Rather than failing to navigate (execution failure), the agent now traverses the full horizon to reach a terminal state, indicating that MiRA has successfully solved the lower-level planning bottleneck. This effectively exposes the higher-level semantic reasoning limitations of the underlying LLM—a preferable trade-off for scalable web automation compared to navigation stagnation.

\section{Discussion}

In this work, we addressed the critical planning bottleneck in autonomous web agents by introducing MiRA, a unified framework that synergizes explicit milestone-based reasoning with potential-based reinforcement learning. By transforming high-level intent into verifiable subgoals, we demonstrated that both proprietary inference engines and open-source RL policies can overcome the horizon truncation failures pervasive in current benchmarks. Beyond the empirical gains on WebArena-Lite, our findings open several avenues for discussion regarding the nature of reward shaping, the reliability of intermediate supervision, and the future of self-evolving autonomy.

\paragraph{Connection to Process Reward Models (PRM).}
Conceptually, our training methodology can be viewed as a form of semi-supervised Process Reward Model (PRM) learning. Unlike traditional PRMs that rely on expensive human annotations for every step of a chain-of-thought, MiRA synthesizes its own ``process supervision'' by generating semantic subgoals via an advanced teacher model and grounding them in execution traces. The potential critic $P_\psi$ effectively acts as a learned value function over these synthesized milestones, densifying the sparse outcome signal. This aligns with recent findings in mathematical reasoning \citep{lightman2023let}, suggesting that verifying intermediate steps (subgoals) is often easier and more sample-efficient than verifying final answers alone. Our work extends this principle to the noisy, non-stationary domain of web navigation, proving that automated process supervision is a viable path to scaling RL for complex agents.

\paragraph{The ``Solid Grounding'' Trade-off.}
The efficacy of our potential-based shaping hinges entirely on the validity of the generated subgoals. As shown in our ablation studies, rewarding \textbf{solid} subgoals—those the agent can robustly perceive and achieve—creates a reliable curriculum, forming a regression curve of progress (e.g., in Gitlab tasks) that pulls the agent deeper into the episode. However, this reliance introduces a subtle \emph{amplification bias}. Because the auxiliary reward is triggered only upon successful grounding of a milestone, the system heavily favors trajectories that master the early stages of a task. If an agent fails to ground the initial subgoals due to extreme exploration difficulty or perception errors, the shaping signal remains silent, effectively reverting the optimization to a sparse-reward regime. Thus, while MiRA excels at extending the ``competence boundary'' of agents that can start a task, it implies that future work must still address the cold start exploration problem in environments where even the first milestone is hard to reach.

\paragraph{Towards Self-Evolving Autonomous Agents.}
Perhaps the most compelling implication of our results is the feasibility of a fully self-contained, self-improving cycle. In our experiments, a single capable model (e.g., Gemini-2.5-pro) served multiple roles: it planned paths, executed actions, judged its own progress (via the \textit{AutoRater}), synthesized new curricula from its failures, and generated the shaping signals used to train its successor. This closed-loop design suggests that we are moving closer to truly \textbf{self-evolving agents} capable of recursive improvement. By integrating the planning, diagnosing, and training processes into a singular, cohesive loop, future systems could autonomously continually expand their capabilities without human intervention, transforming the static ``train-then-deploy'' paradigm into a lifelong learning process.

\paragraph{Future Directions.}
While our framework establishes a strong baseline, several dimensions warrant further exploration to generalize this approach. First, regarding milestone synthesis: we aim to transition from heuristic prompts to \emph{learnable or hierarchical subgoal generators}. Such models could dynamically tailor the granularity of milestones for knowledge-sparse domains where static linear ramps are insufficient. Second, we see promise in refining the shaping mechanism itself; future work should explore \emph{non-linear progress estimation} that account for the varying difficulty of distinct subgoals rather than treating them uniformly. Finally, we propose investigating \emph{signal annealing strategies}. In this regime, subgoals would serve as temporary ``warm-up'' scaffolding that is gradually withdrawn as training progresses, ensuring that the final policy relies on the true task objective rather than over-optimizing for auxiliary rewards.

\section{Conclusion}

We have presented a comprehensive framework that enhances the long-horizon planning capabilities of web agents through dynamic milestoning and potential-based reinforcement learning. Our approach yields state-of-the-art performance on the leading benchmark, boosting the open-source Gemma-3-12B model to outperform vastly larger proprietary systems. By rigorously identifying planning-oriented failures as the dominant error mode and resolving them through dense subgoal feedback, we provide both a robust engineering solution and a clear scientific insight into the limitations of current LLM agents. As web environments grow in complexity, we believe that such explicit, structured planning—embedded both in inference and optimization—will remain essential for bridging the gap between chat-based assistants and truly autonomous digital coworkers.

\section*{Author Contributions and Acknowledgments}

This work was conducted during the first author's internship within the Autonomous Agents (A2) team. Co-authors Sian Gooding and Edward Grefenstette provided overarching guidance, mentorship, and conceptual direction throughout the project. We especially thank Sian for her expertise in error and failure analysis, which shaped both the diagnostic methodology and the final framework. Oriana Riva and Florian Hartmann contributed essential insights to the problem formulation, experimental design, and data preparation, and offered expert feedback on GUI agent automation results.

Beyond the author contributions, we extend our sincere appreciation to Hao Sun and Martin Klissarov for their discussions on the core RL algorithm design and the deep understanding of the agentic training pipeline. We also thank the entire A2 team for providing the computing resources that made this work possible.


\bibliography{main}

\newpage
\appendix

\section{Appendix}

\subsection{Proof of PBRS Equivalence for Goal-Conditioned Tasks}
\label{sec:apdx_proof}

We provide the mathematical proof that Potential-Based Reward Shaping (PBRS) is equivalent to training with the original reward function but initializing the Q-function to the potential function $\Phi$. This proof holds for the goal-conditioned setting, where the reward, value, and potential functions all depend on a goal $g \in \mathcal{G}$.

\subsubsection{Equivalence of Bellman Updates}

Let's define two learning scenarios for a goal-conditioned Bellman optimality update.

\begin{itemize}
    \item 
    \textbf{Scenario A: PBRS (Shaped Reward)}
    \begin{itemize}
        \item Reward: $\tilde{r}(s, a, s', g) = r(s, a, s', g) + \gamma \Phi(s', g) - \Phi(s, g)$
        \item Q-function: $\tilde{Q}_k(s, a, g)$
        \item Initialization: $\tilde{Q}_0(s, a, g) = 0$
        \item Bellman Update: $\tilde{Q}_{k+1}(s, a, g) = \mathbb{E}_{s' \sim P(\cdot|s,a)} \left[ \tilde{r}(s, a, s', g) + \gamma \max_{a'} \tilde{Q}_k(s', a', g) \right]$
    \end{itemize}
    
    \item
    \textbf{Scenario B: $\Phi$-Initialization (Original Reward)}
    \begin{itemize}
        \item Reward: $r(s, a, s', g)$
        \item Q-function: $Q_k(s, a, g)$
        \item Initialization: $Q_0(s, a, g) = \Phi(s, g)$
        \item Bellman Update: $Q_{k+1}(s, a, g) = \mathbb{E}_{s' \sim P(\cdot|s,a)} \left[ r(s, a, s', g) + \gamma \max_{a'} Q_k(s', a', g) \right]$
    \end{itemize}
\end{itemize}

\begin{claim}[Wiewiora, 2003, adapted for goal-conditioning]
For any goal $g \in \mathcal{G}$ and all $k \ge 0$, the Q-functions from the two scenarios are related by:
$$ \tilde{Q}_k(s, a, g) = Q_k(s, a, g) - \Phi(s, g) $$
\end{claim}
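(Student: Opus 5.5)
The proof goes by induction on $k$, for a fixed but arbitrary goal $g \in \mathcal{G}$. Throughout, $g$ is a passive parameter: the transition kernel $P(\cdot\mid s,a)$ does not depend on it, and every function involved ($r$, $\Phi$, $Q_k$, $\tilde{Q}_k$) is evaluated at the same $g$. So the argument is the standard one of \citet{wiewiora2003potential}, carried out pointwise in $g$.

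\textbf{Base case} ($k=0$). By the initializations in Scenarios A and B, $\tilde{Q}_0(s,a,g) = 0$ and $Q_0(s,a,g) - \Phi(s,g) = \Phi(s,g) - \Phi(s,g) = 0$. The identity therefore holds for every $(s,a,g)$.

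\textbf{Inductive step.} Assume $\tilde{Q}_k(s,a,g) = Q_k(s,a,g) - \Phi(s,g)$ for all $(s,a)$. Substitute the shaped reward and the inductive hypothesis into the Scenario A update:
\[
\tilde{Q}_{k+1}(s,a,g) = \mathbb{E}_{s'}\Bigl[ r(s,a,s',g) + \gamma\Phi(s',g) - \Phi(s,g) + \gamma \max_{a'}\bigl( Q_k(s',a',g) - \Phi(s',g) \bigr) \Bigr].
\]
Two simple observations finish the step.

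\begin{enumerate}
\item $\Phi(s',g)$ does not depend on $a'$, so it comes out of the maximum: $\max_{a'}(Q_k(s',a',g) - \Phi(s',g)) = \max_{a'} Q_k(s',a',g) - \Phi(s',g)$. The resulting $-\gamma\Phi(s',g)$ cancels exactly against the $+\gamma\Phi(s',g)$ coming from the shaping term.
\item $\Phi(s,g)$ depends only on the current $(s,g)$ and not on $s'$, so it comes out of the expectation by linearity.
\end{enumerate}

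After these two moves we are left with $\mathbb{E}_{s'}[r + \gamma \max_{a'} Q_k(s',a',g)] - \Phi(s,g)$. By the Scenario B update this equals $Q_{k+1}(s,a,g) - \Phi(s,g)$, which closes the induction.

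\textbf{Main obstacle.} The only delicate point is the treatment of terminal states, which matters because the POMDP in \S\ref{sec:problem_formulation} has a finite horizon and success-terminated episodes. If the bootstrap term $\max_{a'}\tilde{Q}_k(s',\cdot)$ and $\max_{a'}Q_k(s',\cdot)$ are set to $0$ at terminal $s'$, then the $\gamma\Phi(s',g)$ shaping term no longer cancels. I would handle this by adopting the standard convention $\Phi(s_{\text{term}},g)=0$, or equivalently by treating terminal states as absorbing with $Q_k(s_{\text{term}},\cdot,g)=\Phi(s_{\text{term}},g)$ for all $k$; either choice makes the cancellation go through verbatim.

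Finally, since $\Phi$ does not depend on the action, the greedy policies $\arg\max_a \tilde{Q}_k(s,a,g)$ and $\arg\max_a Q_k(s,a,g)$ coincide at every iteration. This yields the policy-invariance corollary cited in the main text.
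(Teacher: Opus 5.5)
Your proof is correct and follows the paper's argument step for step. Both go by induction on $k$ with the trivial base case $0=\Phi(s,g)-\Phi(s,g)$; the inductive step pulls $\Phi(s',g)$ out of the $\max_{a'}$ to cancel the $\gamma\Phi(s',g)$ shaping term, then pulls $\Phi(s,g)$ out of $\mathbb{E}_{s'}$ to recover the Scenario B update. Your remark on terminal states is a sound refinement: the identity needs either $\Phi(s_{\text{term}},g)=0$ or a bootstrap that is not truncated at terminal states, and the paper leaves this implicit by applying the same Bellman update at every state.
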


\begin{proof}[Proof by Induction]
We prove this claim by induction on the iteration step $k$.

\textbf{Base Case (k=0):}
We check the relationship for the initial Q-values:
\begin{align*}
    \tilde{Q}_0(s, a, g) &= Q_0(s, a, g) - \Phi(s, g) \\
    0 &= \Phi(s, g) - \Phi(s, g) \\
    0 &= 0
\end{align*}
The base case holds.

\textbf{Inductive Step:}
Assume the relationship holds for an arbitrary step $k$. This is our inductive hypothesis:
$$ \tilde{Q}_k(s, a, g) = Q_k(s, a, g) - \Phi(s, g) $$
We must now prove that it also holds for $k+1$. We start with the Bellman update for Scenario A:
$$ \tilde{Q}_{k+1}(s, a, g) = \mathbb{E}_{s' \sim P(\cdot|s,a)} \left[ \tilde{r}(s, a, s', g) + \gamma \max_{a'} \tilde{Q}_k(s', a', g) \right] $$
Substitute the PBRS reward definition $\tilde{r}$:
$$ \tilde{Q}_{k+1}(s, a, g) = \mathbb{E}_{s'} \left[ (r(s, a, s', g) + \gamma \Phi(s', g) - \Phi(s, g)) + \gamma \max_{a'} \tilde{Q}_k(s', a', g) \right] $$
Now, substitute the inductive hypothesis $\tilde{Q}_k(s', a', g) = Q_k(s', a', g) - \Phi(s', g)$ into the $\max$ term:
$$ \tilde{Q}_{k+1}(s, a, g) = \mathbb{E}_{s'} \left[ (r + \gamma \Phi(s', g) - \Phi(s, g)) + \gamma \max_{a'} \left( Q_k(s', a', g) - \Phi(s', g) \right) \right] $$
The potential $\Phi(s', g)$ is a function of state $s'$ and goal $g$, but not action $a'$. Therefore, it is a constant with respect to the $\arg\max$ operation over $a'$ and can be pulled out:
$$ \max_{a'} \left( Q_k(s', a', g) - \Phi(s', g) \right) = \left( \max_{a'} Q_k(s', a', g) \right) - \Phi(s', g) $$
Substitute this back into our main equation:
\begin{align*}
    \tilde{Q}_{k+1}(s, a, g) &= \mathbb{E}_{s'} \left[ (r + \gamma \Phi(s', g) - \Phi(s, g)) + \gamma \left( \left( \max_{a'} Q_k(s', a', g) \right) - \Phi(s', g) \right) \right] \\
    &= \mathbb{E}_{s'} \left[ r + \gamma \Phi(s', g) - \Phi(s, g) + \gamma \max_{a'} Q_k(s', a', g) - \gamma \Phi(s', g) \right]
\end{align*}
The $\gamma \Phi(s', g)$ terms cancel out:
$$ \tilde{Q}_{k+1}(s, a, g) = \mathbb{E}_{s'} \left[ r(s, a, s', g) + \gamma \max_{a'} Q_k(s', a', g) - \Phi(s, g) \right] $$
Since $\Phi(s, g)$ depends on the current state $s$ and goal $g$, but not on the next state $s'$, we can pull it out of the expectation over $s'$:
$$ \tilde{Q}_{k+1}(s, a, g) = \left( \mathbb{E}_{s'} \left[ r(s, a, s', g) + \gamma \max_{a'} Q_k(s', a', g) \right] \right) - \Phi(s, g) $$
The term in the parentheses is exactly the Bellman update for $Q_{k+1}(s, a, g)$ from Scenario B.
$$ \tilde{Q}_{k+1}(s, a, g) = Q_{k+1}(s, a, g) - \Phi(s, g) $$
This completes the inductive step.
\end{proof}

\subsubsection{Implication: Goal-Conditioned Policy Invariance}

The result $\tilde{Q}_k(s, a, g) = Q_k(s, a, g) - \Phi(s, g)$ proves that the goal-conditioned Q-function learned with PBRS is simply the ``true'' goal-conditioned Q-function offset by the potential $\Phi(s, g)$. This directly proves policy invariance for any given goal $g$.

A greedy, goal-conditioned policy $\pi_k$ at step $k$ is derived by:
$$ \pi_k(s, g) = \arg\max_{a} Q_k(s, a, g) $$
If we derive the policy $\tilde{\pi}_k$ from the PBRS Q-function $\tilde{Q}_k$:
$$ \tilde{\pi}_k(s, g) = \arg\max_{a} \tilde{Q}_k(s, a, g) $$
Substitute our proven relationship:
$$ \tilde{\pi}_k(s, g) = \arg\max_{a} \left( Q_k(s, a, g) - \Phi(s, g) \right) $$
Since $\Phi(s, g)$ is a constant with respect to the $\arg\max$ over $a$ (as it does not depend on the action $a$), it does not change which action yields the maximum value.
$$ \arg\max_{a} \left( Q_k(s, a, g) - \Phi(s, g) \right) = \arg\max_{a} Q_k(s, a, g) $$
Therefore, for any state $s$ and goal $g$:
$$ \tilde{\pi}_k(s, g) = \pi_k(s, g) $$
The policy derived from the shaped Q-function is identical to the policy derived from the original Q-function at every step $k$. This guarantees that the final optimal policy $\pi^*(s, g)$ learned under the shaped reward $\tilde{R}$ is identical to the optimal policy under the original reward $R$.

To determine the scaling factor $\alpha$, we employ a validation-based selection strategy. We perform a grid search over $\alpha \in [0.1, 0.8]$ using a held-out set of validation tasks. The optimal $\alpha$ is selected to maximize the \emph{Success Rate (SR)}, subject to the constraint that the shaped reward magnitude does not overwhelm the ground-truth environmental signal. Empirically, we observed that $\alpha=0.3$ provided the best trade-off, effectively densifying the reward space while retaining sensitivity to the sparse terminal rewards.





\subsection{Benefits and Trade-offs of Online Reasoning via Subgoals}
\label{apdx:thinking_tradeoff}

While dynamic milestoning improves the grounding and reliability of agent reasoning, it also introduces an inherent trade-off between reasoning depth and computational efficiency. In particular, allocating additional ``thinking budget''—i.e., allowing the reasoning model to perform deeper reflective planning—can improve task success but increases inference latency.

\begin{wrapfigure}{r}{0.6\textwidth}
    \vspace{-10pt}
    \centering
    \includegraphics[width=0.9\linewidth]{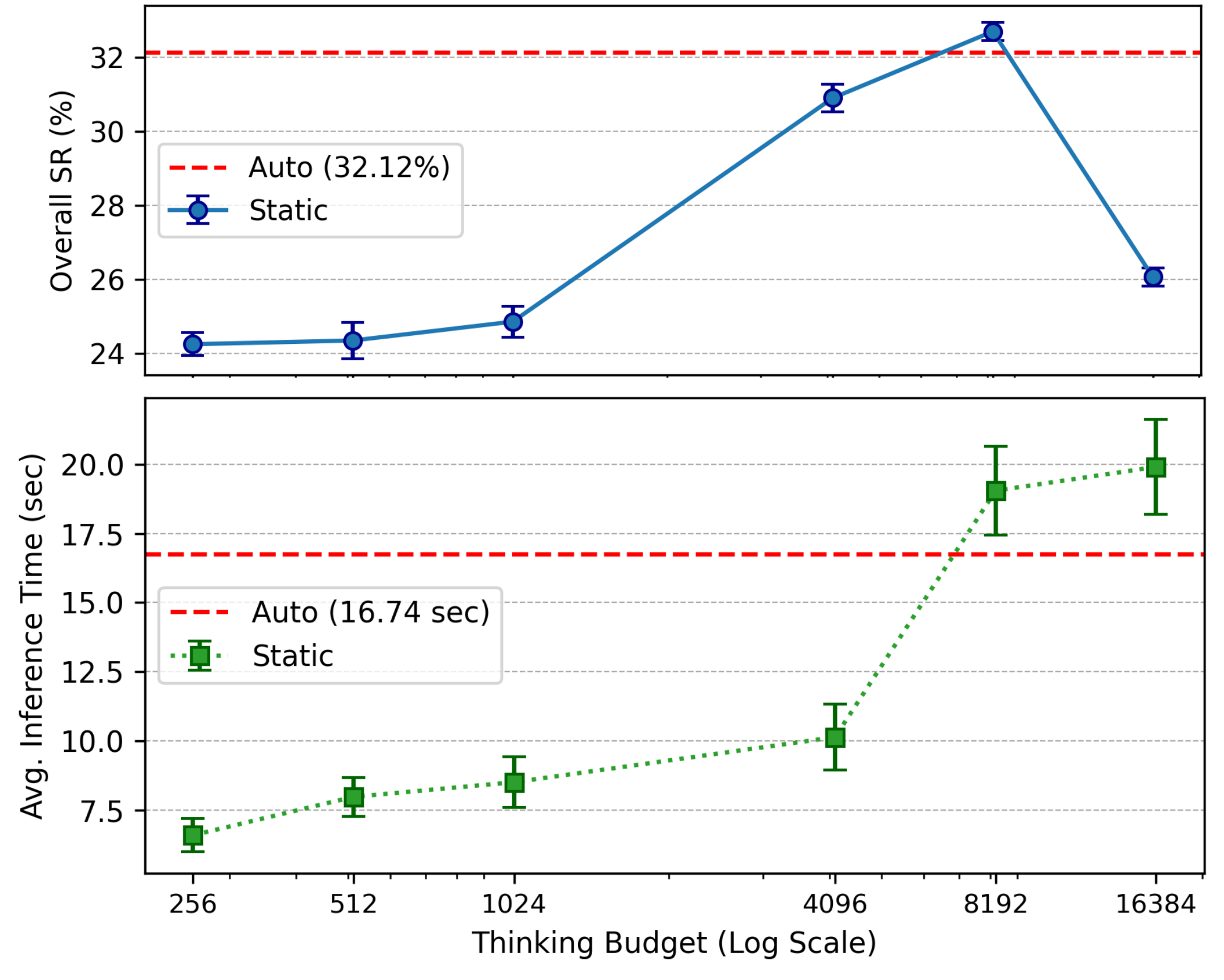}
    \caption{Overall Success Rate on WA-Lite and Inference Time vs. Thinking Budget. The ``Auto (Dynamic) Thinking'' mode from Gemini agent outperforms most static budgets, while static budgets show a clear trade-off between accuracy and rising inference time.\protect\footnotemark}
    \vspace{-10pt}
    \label{fig:thinking_tradeoff}
\end{wrapfigure}

\footnotetext{The inference times reported here are calculated from the averaged step-wise interactions across 25 identical tasks run at each budget level.}

Figure \ref{fig:thinking_tradeoff} presents the results from three repeated online experiments, illustrating the relationship between the allocated ``Thinking Budget'' and the resulting agent performance. Increasing the static thinking budget improves the Overall Success Rate (SR) from approximately $24.3\%$ (at 256 tokens) to a peak of approximately $32.5\%$ (at 8192 tokens). However, this improvement comes at the cost of a non-linear increase in inference time, which rises from approximately $6.5$ seconds to approximately $19$ seconds.

Notably, excessively large static budgets (e.g., 16384 tokens) lead to diminishing returns, with the success rate dropping to approximately $26\%$. This suggests that beyond a certain point, additional reasoning may introduce unnecessary deliberation without improving decision quality.

In comparison, the ``Auto (Dynamic)'' thinking mode used by the Gemini-based agent achieves a success rate of $32.12\%$, which is statistically comparable to the optimal static budget while reducing the average inference time to $16.74$ seconds. This dynamic allocation of reasoning effort enables the agent to balance accuracy and computational cost without requiring manual tuning of the thinking budget.

\subsection{Iterative Policy Refinement and Curriculum Generation}
\label{apdx:iterative_rl}

This section provides additional implementation details for the iterative refinement framework described in Section~\ref{sec:iterative_rl}. The full training pipeline alternates between environment interaction, offline RL optimization, and curriculum generation to progressively improve the agent.

\paragraph{Phase $K$: Environment Interaction and Data Collection.}
At the start of phase $K$, the current model $\text{RL-agent } K$ is deployed on the task distribution $\text{Task Set for Phase } K$ within the environment $\text{Env}$. This stage gathers diverse trajectories under the current policy's capabilities.

\begin{enumerate}
    \item \textbf{Rollout Execution.} The agent performs interactive rollouts within $\text{Env}$, producing sequences of states, actions, and observations that reflect both successful behaviors and failure cases.
    
\item \textbf{Trace Processing and Buffer Integration.} Newly generated rollouts are passed to the \emph{Trace Processing} module for annotation and filtering.
\begin{itemize}
    \item \textit{Subgoal Annotation:} We utilize LLM-based \textbf{subgoal checkers}\footnote{We adopt few-shot based Gemini-2.5-pro and engineered prompts} to verify the completion status of intermediate objectives at every step. These binary checks are interpolated to generate dense \textbf{subgoal progress scores} ($p_t^\ast$), which serve as regression targets for the Potential Critic.
    \item \textit{Perplexity Filtering:} To ensure the actor learns from high-quality data, we evaluate the \textbf{perplexity} of the current actor policy on the collected traces. Trajectories exhibiting perplexity above a dynamic threshold are discarded as out-of-distribution or irrational behavior.
    \item \textit{Training Target Assignment:} The filtered data is processed to support the hierarchical optimization in Algorithm~1. First, the \textbf{Potential Critic} is trained on the progress labels $p_t^\ast$. Second, the \textbf{Main Value Critic} is updated using the sparse terminal outcomes (ORM labels). Finally, using the updated potential landscape, we compute shaped rewards and the resulting \textbf{Monte-Carlo (MC) returns} ($G_t$) to optimize the \textbf{Actor}. For each phase, we perform one epoch of RL training over the collected data.
\end{itemize}
This produces an updated and temporally coherent \textbf{Training Data} corpus that captures the agent's evolving experience distribution while filtering out noise.
\end{enumerate}

\paragraph{Offline RL Training: Model Patching via Batched Updates.}
The aggregated $\text{Training Data}$ is then used to refine the model parameters through \emph{offline batched RL}, effectively ``patching'' the agent between deployment rounds. The training phase minimizes the regression-style policy loss $\mathcal{L}_\pi(\theta)$ from Eq.~\eqref{eq:mse_objective}, along with critic updates from Eqs.~\eqref{eq:potential_regression} and \eqref{eq:critic_value}. The result is a newly optimized policy $\text{RL-agent } K\!+\!1$, which inherits prior knowledge while incorporating corrections from the latest rollout data. Offline RL offers two practical benefits: (1) it stabilizes learning by decoupling environment noise from gradient updates, and (2) it enables fine-grained policy improvements at scale by batching large numbers of stored trajectories, including previously failed ones, into a consistent optimization phase.

\paragraph{Online Curriculum Generation: Failure-Driven Task Evolution.}
Once the offline phase concludes, the system performs targeted task selection to maintain progressive learning difficulty:

\begin{enumerate}
    \item \textbf{Task Generation ($\text{Task Gen.}$).} Instead of purely synthesizing new tasks, we utilize a pre-collected pool of 1,573 feasible human-selected tasks (disjoint from the evaluation set). The $\text{Task Gen.}$ module analyzes the $\text{Failure}$ traces from Phase $K$ and resamples tasks from this pool that exhibit high similarity to the failed instances. In practice, we use larger models such as Gemini-2.5-flash to identify semantic similarity, limit the top-$K$ similar tasks per failure trace, and allow overlap. For rare or unbalanced task categories where the pool coverage is sparse (e.g., ``Map'' navigation), we synthesize new instructions by perturbing failure samples to ensure balanced distribution.
    
    \item \textbf{Adaptive Phase Transition.} The resulting $\text{Task Set for Phase } K\!+\!1$ becomes the curriculum for the next phase. The refined agent $\text{RL-agent } K\!+\!1$ is then redeployed, closing the loop between environment feedback and policy improvement.
\end{enumerate}

\paragraph{Iterative Curriculum Refinement.}
This iterative process forms a continual adaptation loop:
\[
\text{Phase } K \;\xrightarrow[\text{Failures}]{\text{Task Gen.}}\; \text{Phase } K\!+\!1,
\]
where each cycle explicitly leverages failure cases as learning signals. Over successive phases, the curriculum automatically evolves from simple to complex tasks, while the policy progressively internalizes successful behaviors through potential-shaped, subgoal-oriented offline RL. This cyclical design ensures that MiRA-RL maintains long-horizon learning efficiency while continually repairing and extending the model's competence boundary.

\begin{algorithm}[t]
\caption{MiRA Outer Loop: Curriculum Generation \& Experience Replay}
\label{alg:mira_curriculum}
\begin{algorithmic}[1]

\State \textbf{Input:} Initial set $I_0$, task pool $I_{\text{pool}}$, SFT model $\mathcal{M}_{\text{SFT}}$, Envs, Reward Model $\mathcal{M}_{\text{ORM}}$
\State \textbf{Initialize:} $\pi_1 \gets \mathcal{M}_{\text{SFT}}$, Critics $V_\phi, P_\psi$, Buffer $\mathcal{B} \gets \emptyset$, Failure Set $\mathcal{F} \gets \emptyset$, $I_{\text{train}} \gets I_0$

\For{Phase $k = 1$ to $N$}
    \State Collect trajectories $D_{\text{roll}}$ via $\pi_k$ on $I_{\text{train}}$; label successes with $\mathcal{M}_{\text{ORM}}$
    \State Split $D_{\text{roll}} \to D_{\text{success}} \cup D_{\text{fail}}$
    \State Update $\mathcal{B} \gets \mathcal{B} \cup D_{\text{success}}$ and $\mathcal{F} \gets \mathcal{F} \cup \{\text{instructions of } D_{\text{fail}}\}$

    \State Calculate $\text{ppl}(\tau)$ for $\tau \in \mathcal{B}$ using current $\pi_k$
    \State Filter $D_{\text{exp}} \gets \big\{\tau \in \mathcal{B} \mid 1/0.9 \le \text{ppl}(\tau) \le 1/0.5 \big\}$
    \State Combine data: $D_{\text{train}} \gets D_{\text{roll}} \cup D_{\text{exp}}$

    \State $(\pi_{k+1}, V_\phi, P_\psi) \gets \textsc{MiRA-RL}(D_{\text{train}}, V_\phi, P_\psi, \pi_k, \dots)$

    \State $I_{\text{sim}} \gets \text{Resample}(I_{\text{pool}})$ based on semantic similarity to $\mathcal{F}$
    \State $I_{\text{syn}} \gets \text{Generate}(\mathcal{F})$ for rare/unbalanced failure types
    \State $I_{\text{train}} \gets \text{Select}(I_{\text{sim}} \cup I_{\text{syn}})$
\EndFor

\end{algorithmic}
\end{algorithm}

\subsection{Experimental Details}
\label{sec:apdx_exp}

In this section, we provide a detailed breakdown of the experimental setup, including the specific prompts used for subgoal decomposition, the hyperparameters for all baseline comparisons, and the system instructions that govern agent behavior and evaluation.

\paragraph{Subgoal Generation Examples and Decomposition}

\label{sec:apdx_subgoaleg}

To bridge the gap between high-level user intents and low-level browser actions, MiRA employs a multimodal decomposition strategy. We utilize \textbf{Gemini-2.5-pro} as the teacher model due to its strong visual reasoning capabilities. As shown in Figure~\ref{fig:subgoal-prompt}, the model is prompted with the task instruction and the initial webpage screenshot, and is strictly instructed to decompose the task into a fixed number of milestones (e.g., exactly 4 steps). This standardization ensures consistent temporal granularity across diverse tasks. Table~\ref{tab:subgoal_examples} presents qualitative examples of these generated subgoals across various WebArena environments (Reddit, GitLab, CMS, etc.), demonstrating the model's ability to identify logical checkpoints such as ``Maps to subreddit,'' ``Search for user,'' and ``Submit post.''

\renewcommand{\arraystretch}{1.1}

\newcommand{\tablist}[1]{%
    \begin{enumerate}[leftmargin=*, nosep, before=\vspace{2pt}, after=\vspace{2pt}]
        #1
    \end{enumerate}%
}

\begin{longtable}{p{0.12\textwidth} p{0.30\textwidth} p{0.52\textwidth}}
\captionsetup{justification=centering}
\caption{Examples of generated subgoals. See Figure~\ref{fig:agent_architecture} for online usage.}
\label{tab:subgoal_examples} \\
\toprule
\footnotesize \textbf{Website} & \footnotesize \textbf{User Intent} & \footnotesize \textbf{Generated Subgoals} \\
\midrule
\endfirsthead

\multicolumn{3}{c}%
{{\bfseries \footnotesize \tablename\ \thetable{} -- continued from previous page}} \\
\toprule
\footnotesize \textbf{Website} & \footnotesize \textbf{User Intent} & \footnotesize \textbf{Generated Subgoals} \\
\midrule
\endhead

\midrule
\multicolumn{3}{r}{{\footnotesize Continued on next page}} \\
\bottomrule
\endfoot

\bottomrule
\endlastfoot

\footnotesize

\multirow{8}{*}{\textbf{Reddit}} 
& Ask for product recommendations for running shoes within a budget of \$100 in r/sports 
& \tablist{
    \item Navigate to the ``r/sports'' subreddit.
    \item Initiate the creation of a new post.
    \item Write a title and body asking for shoes under \$100.
    \item Submit the post.
} \\ 
\cmidrule{2-3}
& Like all submissions created by CameronKelsey in subreddit earthporn 
& \tablist{
    \item Navigate to the subreddit `earthporn'.
    \item Search for posts by user `CameronKelsey'.
    \item Click the `like` button on the first post.
    \item Repeat for all other visible posts.
} \\
\midrule

\multirow{9}{*}{\textbf{Gitlab}} 
& Create a new public project ``awesome-llms'' and add primer, convexegg, abishek as members 
& \tablist{
    \item Navigate to new project creation page.
    \item Enter ``awesome-llms'', set visibility to public.
    \item Finalize project creation.
    \item Add primer, convexegg, and abishek in members page.
} \\
\cmidrule{2-3}
& Display issues in \texttt{kkroening and ffmpeg-python} with labels related to questions 
& \tablist{
    \item Navigate to \texttt{kkroening/ffmpeg-python} repo.
    \item Go to the ``Issues'' tab.
    \item Open ``Labels'' filter menu.
    \item Select label related to ``questions''.
} \\
\midrule

\multirow{8}{*}{\makecell[tl]{\textbf{Shopping}\\\textbf{Admin}\\\textbf{(CMS)}}} 
& Tell me the grand total of invoice 000000002 
& \tablist{
    \item Navigate to sales/invoices section.
    \item Search for invoice 000000002.
    \item Open details for the invoice.
    \item Identify grand total amount.
} \\
\cmidrule{2-3}
& Tell me the status of my latest order and when will it arrive 
& \tablist{
    \item Navigate to ``My Account'' or ``Profile''.
    \item Go to ``Orders'' or ``Order History''.
    \item Identify most recent order.
    \item Extract status and estimated arrival date.
} \\
\midrule

\multirow{9}{*}{\textbf{Map}} 
& Min travel time by car from Animal Rescue League of Pittsburgh to Schenley park? 
& \tablist{
    \item Navigate to map website, access directions.
    \item Enter ``Animal Rescue League'' as start.
    \item Enter ``Schenley park'' as destination.
    \item Select car mode, find min travel time route.
} \\
\cmidrule{2-3}
& Tell me the coordinates of Apple Store near Pitt in DD format 
& \tablist{
    \item Search for ``Pitt'' on map website.
    \item Search for ``Apple Store'' nearby.
    \item Select the correct Apple Store.
    \item Obtain coordinates in DD format.
} \\
\midrule

\multirow{9}{*}{\makecell[tl]{\textbf{Shopping}\\\textbf{(OSS)}}} 
& Look up most recent models of XBox controllers released 2020-2021? 
& \tablist{
    \item Search `Xbox controller' on Wikipedia.
    \item Navigate to main page.
    \item Locate `Models' or `History' section.
    \item Identify controllers released 2020-2021.
} \\
\cmidrule{2-3}
& Show least expensive switch card holder with capacity of 15+ cards. 
& \tablist{
    \item Search for ``switch card holder''.
    \item Filter for capacity of 15 or more.
    \item Sort results by price (low to high).
    \item Select first item.
} \\

\end{longtable}

\paragraph{Hyperparameter Configuration}
To ensure a fair comparison, we align our hyperparameters with established baselines wherever possible. Table~\ref{tab:hyperparams} lists the training configurations for SFT, AWR, DigiRL, WebRL, and MiRA. For the actor and critic networks, we maintain a consistent learning rate of $1 \times 10^{-6}$ and a batch size of 128 across all RL methods. A key distinction for MiRA is the \textbf{Potential Critic}, which uses a higher learning rate of $2 \times 10^{-5}$ and is trained for 3 epochs. This adjustment is necessary because the Potential Critic performs a regression task (fitting dense progress scores) rather than standard value estimation, requiring stronger gradient updates to capture the subtle shape of the progress landscape effectively.

\begin{table}[t!]
\centering
\footnotesize
\captionsetup{justification=centering}
\caption{Hyperparameters used in baseline methods.}
\label{tab:hyperparams}
\begin{tabular}{l l c}
\toprule
\textbf{Method} & \textbf{Hyperparameter} & \textbf{Value} \\
\midrule
\multirow{6}{*}{SFT} 
& learning rate & $1\times10^{-5}$ \\
& lr scheduler type & cosine \\
& warmup ratio & 0.1 \\
& batch size & 128 \\
& training epoch & 1 \\
& cutoff length & 16384 \\
\midrule
\multirow{8}{*}{AWR}
& actor learning rate & $1\times10^{-6}$ \\
& actor lr scheduler type & constant \\
& critic learning rate & $1\times10^{-6}$ \\
& critic lr scheduler type & constant \\
& batch size & 128 \\
& discount factor & 0.9 \\
& actor training epoch & 1 \\
& critic training epoch & 1 \\
\midrule
\multirow{12}{*}{DigiRL}
& actor learning rate & $1\times10^{-6}$ \\
& actor lr scheduler type & constant \\
& critic learning rate & $1\times10^{-6}$ \\
& critic lr scheduler type & constant \\
& instruction value function lr & $1\times10^{-6}$ \\
& instr.\ value function scheduler & constant \\
& batch size & 128 \\
& discount factor & 0.9 \\
& actor training epoch & 1 \\
& critic training epoch & 1 \\
& instruction value function epoch & 1 \\
& rollout temperature & 1 \\
& replay buffer size & 100{,}000 \\
\midrule
\multirow{8}{*}{WebRL}
& actor learning rate & $1\times10^{-6}$ \\
& actor lr scheduler type & constant \\
& critic learning rate & $1\times10^{-6}$ \\
& critic lr scheduler type & constant \\
& batch size & 128 \\
& discount factor & 0.9 \\
& actor training epoch & 1 \\
& critic training epoch & 1 \\
& rollout temperature & 1 \\
\midrule
\multirow{12}{*}{MiRA}
& actor learning rate & $1\times10^{-6}$ \\
& actor lr scheduler type & constant \\
& critic learning rate & $1\times10^{-6}$ \\
& critic lr scheduler type & constant \\
& potential critic learning rate & $2\times10^{-5}$ \\
& potential critic lr scheduler type & constant \\
& potential critic training epoch & 3 \\
& batch size & 128 \\
& discount factor & 0.9 \\
& actor training epoch & 1 \\
& critic training epoch & 1 \\
& rollout temperature & 1 \\
\bottomrule
\end{tabular}
\end{table}

\paragraph{System Prompts for Execution and Evaluation}
The performance of web agents is heavily influenced by the precise formatting of their inputs and allowed outputs. Figure~\ref{fig:apdx_agent} visualizes the data flow, where the agent receives the task instruction, action history, and current HTML. To constrain the agent's output to valid executable actions, we employ the system prompt detailed in Figure~\ref{fig:system-prompt} for small open-weight models (e.g., Llama-3, Gemma). This prompt restricts responses to a predefined Python-style DSL (e.g., \texttt{do(...)}, \texttt{click(...)}) and explicitly forbids the hallucination of non-visible elements.

\begin{figure}[t!]
    \centering
    \includegraphics[width=1.0\linewidth]{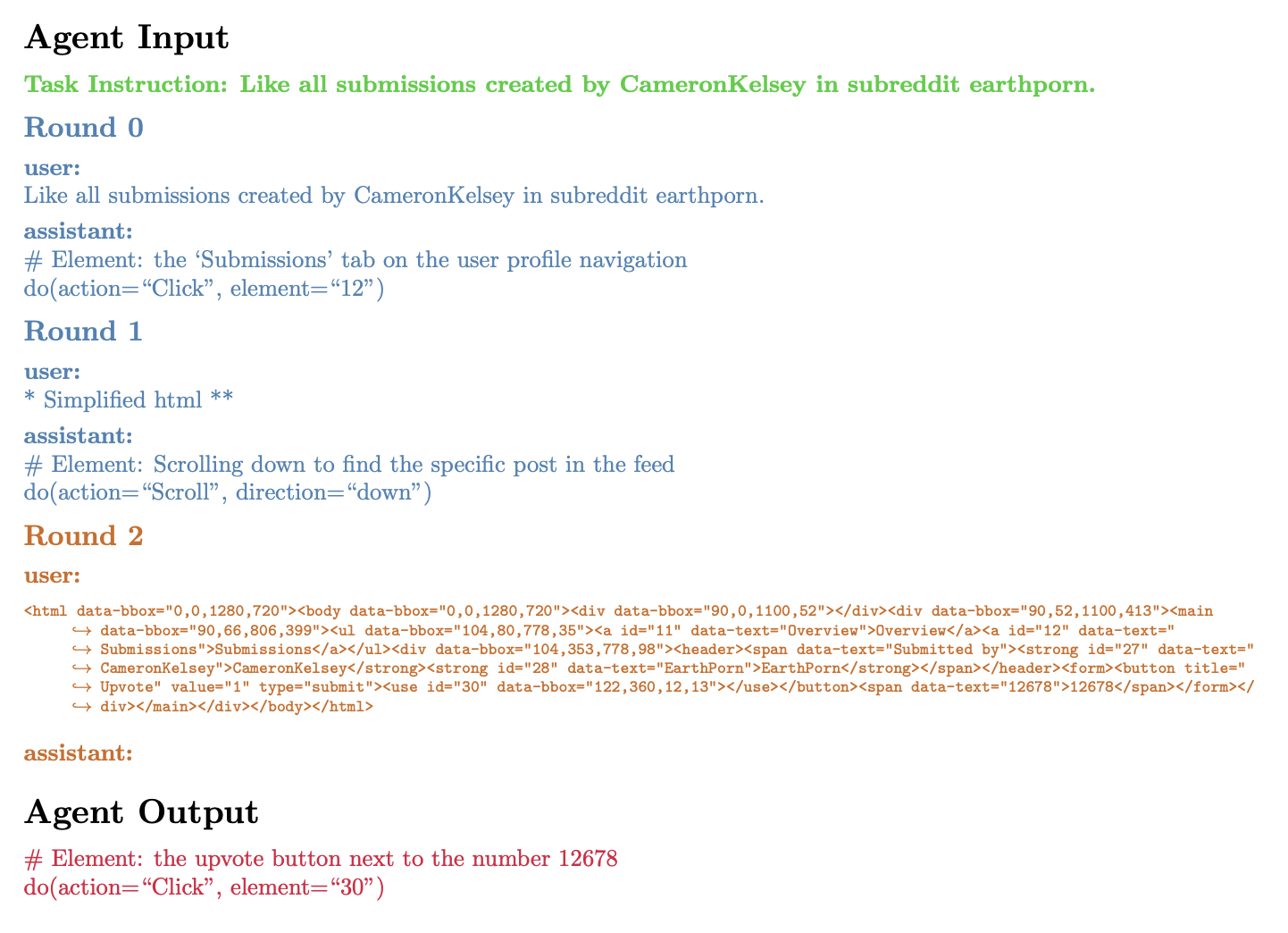}
    \caption{The input and output format of MiRA and baselines, where the input is composed of task instruction (in green), action history (in blue), and HTML of the current webpage (in orange). The output (in red) is the action taken on the current webpage.}
    \label{fig:apdx_agent}
\end{figure}

In contrast, when evaluating large proprietary models (e.g., GPT-4o, Gemini-1.5-Pro), we adopt the standard text-based accessibility tree prompt shown in Figure~\ref{fig:proprietary-prompt}. This configuration aligns with the default WebArena benchmark settings and prior works~\citep{chae2025web,qi2024webrl,wei2025webagent}, utilizing the stronger instruction-following capabilities of frontier models to parse structured element lists rather than simplified HTML. Finally, for reward modeling, we utilize the Outcome Reward Model (ORM) prompt shown in Figure~\ref{fig:orm-prompt}. This prompt instructs the judge to evaluate success based on the alignment between the user intent and the final page state, serving as the ground-truth signal for filtering successful trajectories during the warmup phase.

\begin{figure}[t!]
\centering
\footnotesize
\begin{tcolorbox}[colback=gray!3,colframe=black!40,title=System Prompt Used in small model baselines incl. MiRA]
\textbf{\# Setup} \\
You are a professional web browsing agent assistant that can fulfill user's high-level
instructions. Given simplified HTML of the browsed webpage at each step, you plan operations
in Python-style pseudo code using provided functions, or customize functions if necessary. \\[3pt]

\textbf{\# Predefined Functions}
\begin{verbatim}
def do(action, argument, element):
    """A single browsing operation on the webpage."""
def exit(message):
    """End the browsing process."""
def go_backward():
    """Go back to the previous page."""
def go_forward():
    """Go forward to the next page."""
\end{verbatim}

\textbf{Allowed actions:}
Click, Right Click, Type, Search, Hover, Scroll Up, Scroll Down, Press Enter,  
Switch Tab, Select Dropdown Option, Wait. \\[3pt]

\textbf{Examples:}
\begin{verbatim}
# Element: the `REPORTS' section on the left sidebar
do(action="Click", element="7")

# Element: the `Period' dropdown
do(action="Select Dropdown Option", argument="Month", element="20")
\end{verbatim}

\textbf{Rules:}
\begin{itemize}[nosep,leftmargin=10pt]
\item Only \textbf{one line of code} at a time.
\item Do not invent non-visible elements.
\item Use ``\# Element'' comments to indicate the chosen element.
\item Avoid loops; use if-else only when necessary.
\item If the page is loading, prefer ``Wait''.
\item Never use the browser address bar to navigate.
\end{itemize}

\textbf{User Prompt Template:}
\begin{verbatim}
Task Instruction: {instruction}
Action History: {action history}
The Current HTML: {html of last state}
\end{verbatim}
\end{tcolorbox}
\captionsetup{justification=centering}
\caption{The simplified system prompt employed in baseline agents.}
\label{fig:system-prompt}
\end{figure}

\begin{figure}[t!]
\centering
\footnotesize
\begin{tcolorbox}[colback=gray!3,colframe=black!40,title=System Prompt for Outcome Reward Model (ORM)]
\textbf{\# Role} \\
You are an expert in evaluating the performance of a website navigation agent. The agent is designed to help a human user navigate the website to complete a task. Given the user's intent, the agent's action history, and the final state of the screen, your goal is to decide whether the agent's execution is successful or not. You must respond with YES or NO. \\[3pt]

\textbf{\# User Prompt Template}
\begin{verbatim}
The User Intent:
{task instruction}

Action History:
{history of agent actions}

The Current Screenshot or Page:
{simplified html/screenshots of final state}
\end{verbatim}
\end{tcolorbox}
\caption{The system prompt used for the Outcome Reward Model (ORM) to verify trajectory success.}
\label{fig:orm-prompt}
\end{figure}

\begin{figure}[t!]
\centering
\footnotesize
\begin{tcolorbox}[colback=gray!3,colframe=black!40,title=System Prompt for Proprietary Models (GPT-4 / Gemini)]
\textbf{\# Role} \\
You are an autonomous intelligent agent tasked with navigating a web browser. You will be given web-based tasks and must accomplish them by issuing specific actions. \\[3pt]

\textbf{\# Input Information}
\begin{itemize}[nosep,leftmargin=10pt]
\item \textbf{Objective:} The user's task description.
\item \textbf{Observation:} A simplified accessibility tree representation of the webpage.
\item \textbf{Current URL:} The page you are currently navigating.
\item \textbf{Previous Action:} The action you just performed.
\end{itemize}

\textbf{\# Available Actions}
\begin{verbatim}
click [id]                  # Click on element with specific ID
type [id] [content] [0|1]   # Type content (0=no enter, 1=enter)
hover [id]                  # Hover over element
press [key_comb]            # Simulate key combination (e.g., "Ctrl+v")
scroll [up|down]            # Scroll the page
new_tab, close_tab          # Manage tabs
tab_focus [index]           # Switch to tab by index
goto [url]                  # Navigate to URL
go_back, go_forward         # History navigation
stop [answer]               # Complete task (with optional answer)
\end{verbatim}

\textbf{\# Response Format}
Your response must always start with the phrase ``In summary, the next action I will perform is'' followed by the action inside triple backticks.

\textbf{\# Examples}
\begin{verbatim}
OBSERVATION:
[1744] link `HP Inkjet Fax Machine'
[1757] button `Add to Cart'
URL: http://onestopmarket.com
OBJECTIVE: What is the price of the Fax Machine?

RESPONSE:
In summary, the next action I will perform is ```stop [$279.49]'''
\end{verbatim}

\begin{verbatim}
OBSERVATION:
[164] textbox `Search'
[171] button `Go'
URL: http://openstreetmap.org
OBJECTIVE: Show me hotels in Pittsburgh

RESPONSE:
In summary, the next action I will perform is
```type [164] [hotels in Pittsburgh] [1]'''
\end{verbatim}
\end{tcolorbox}
\caption{The standard system prompt (Set-of-Marks Style) used for large proprietary models (e.g., GPT-4, Gemini) in WebArena, utilizing a text-based accessibility tree observation space.}
\label{fig:proprietary-prompt}
\end{figure}

\begin{figure}[t!]
\centering
\footnotesize
\begin{tcolorbox}[colback=gray!3,colframe=black!40,title=Prompt for Subgoal Generation]
\textbf{\# Role} \\
You are an assistant inside the WebArena benchmark environment. WebArena simulates websites such as GitLab, Wikipedia, shopping portals, and admin dashboards. Tasks are solved only through browser interactions (clicks, typing, navigation)—never by recalling outside world knowledge. Your job is to decompose each task intent into EXACTLY 4 concrete, high-level subtasks. Always provide exactly 4 numbered steps, no more and no less. \\[3pt]

\textbf{\# Example 1: GitLab}
\begin{verbatim}
Task: Create a private JEKYLL repository called 'awesome_project'.
Screenshots: [Image1], [Image2], [Image3]

Answer:
1. Access GitLab and locate 'Create New'
2. Select the JEKYLL template from the template menu
3. Enter repo name 'awesome_project' and set it private
4. Create repository and confirm landing on its homepage
\end{verbatim}

\textbf{\# Example 2: Map/Wiki}
\begin{verbatim}
Task: Find the page of the college(s) where *The Chair* was filmed in
Pennsylvania other than those in Pittsburgh on the map.
Screenshots: [Image1], [Image2], [Image3]

Answer:
1. Identify correct Wikipedia page for *The Chair* (2021 TV series) and
   locate filming colleges in Pennsylvania (excluding Pittsburgh)
2. Copy the college name from the Wikipedia page
3. Open the map website and search the copied name
4. Navigate to the college's location on the map
\end{verbatim}

\textbf{\# User Prompt Template}
\begin{verbatim}
=====Your Turn=====
Q: TASK: {task_intent}
Screenshot: {task_initial_screenshot}
Decompose into exactly 4 milestones.
A: <SUBGOALS>
\end{verbatim}
\end{tcolorbox}
\caption{The multimodal prompt used to decompose tasks into four structured subgoals (Taking fixed 4 subgoals in total as example) via Gemini-2.5-pro. Prompt and examples used are optimized based on our discussions in section~\ref{sec:sg_gen}.}
\label{fig:subgoal-prompt}
\end{figure}

\subsection{MiRA Agent Design and Rationale}

\subsubsection{Observation and Action Space}
The MiRA agent operates within a structured environment designed to bridge the gap between high-level user instructions and low-level DOM interactions. The agent's observation space consists of three key components:
\begin{itemize}
    \item \textbf{User Instruction:} The natural language goal provided by the user.
    \item \textbf{Action History:} A chronological record of past actions, providing essential context for tracking subgoal progress.
    \item \textbf{Webpage HTML:} A simplified representation of the current webpage's HTML. To facilitate precise manipulation, we prune the DOM structure and assign unique element IDs to all interactive candidates.
\end{itemize}

The action space is designed to be granular yet comprehensive, enabling the execution of complex subgoals. The defined actions include:
\begin{itemize}[noitemsep]
    \item \textbf{Interaction:} \textit{Click}, \textit{Hover}, \textit{Type}, \textit{Search} (type and enter), \textit{Select dropdown option}.
    \item \textbf{Navigation:} \textit{Goto}, \textit{Go back}, \textit{Go forward}, \textit{Scroll}.
    \item \textbf{Tab Management:} \textit{New tab}, \textit{Tab focus}, \textit{Close tab}.
    \item \textbf{System:} \textit{Press} (key combinations), \textit{Exit} (terminate and respond).
\end{itemize}

To enhance interpretability and grounding, the agent appends metadata to its actions: a \texttt{\# Element:} tag describing the target DOM element, and a \texttt{\# Note:} tag citing supporting information from the webpage content.

\subsubsection{Training Framework and Rationale}
The design of the MiRA training pipeline addresses the challenge of learning robust policies from sparse rewards. The process follows a self-evolving curriculum:

\paragraph{Initialization via Supervised Fine-Tuning (SFT):}
We first establish a baseline capability by performing SFT on the agent using the WebArena-Lite dataset. This SFT-trained model initializes both the actor and the critic (with a randomly initialized value head) and is used to populate the initial replay buffer and failure set.

\paragraph{Self-Evolving Curriculum Reinforcement Learning:}
To progressively enhance performance, we employ a curriculum-based reinforcement learning approach. In each phase, we sample up-to 400 new instructions from the full task set, filtering by Gemini-2.5-flash that meet specific complexity criteria. The actor and critic are updated using a hybrid data mixture:
\begin{enumerate}
    \item Newly generated interaction trajectories from the current curriculum phase.
    \item Historical high-quality data from the replay buffer, specifically filtering for trajectories with perplexity between $1/0.9$ and $1/0.5$.
\end{enumerate}
This dual-source training strategy ensures the agent retains previously learned behaviors while adapting to increasingly complex subgoals.


\subsection{Extra Ablation Studies}
\label{sec:apdx_ab}

\noindent \textbf{The impact of reward shaping ($\alpha$).}
We perform a grid search over $\alpha \in [0.1, 0.8]$ using a held-out validation set of WebArena-Lite tasks. As shown in Table 5, very small shaping factors (0.0–0.1) behave similarly to the sparse-reward baseline, yielding only 30.9–32.1\% success. In this regime, the auxiliary signal is too weak to overcome reward sparsity and thus behaves largely as noise.

\begin{wrapfigure}{r}{0.45\textwidth}
    \centering
     \captionsetup{justification=centering}
    \captionsetup{font=footnotesize}
    {\footnotesize
    \captionof{table}{Impact of reward shaping factor $\alpha$.}
    }
    \label{tab:alpha_ablation}
    \footnotesize
    \begin{tabular}{lcccccc}
        \toprule
        $\alpha$ & 0.0 & 0.1 & \textbf{0.3} & 0.5 & 0.8 \\
        \midrule
        SR (\%) & 30.9 & 31.5 & \textbf{36.4} & 28.5 & 25.5 \\
        \bottomrule
    \end{tabular}
\end{wrapfigure}

Performance peaks at $\alpha = 0.3$, achieving \textbf{36.4\%}, confirming that moderate shaping effectively accelerates credit assignment without drowning out the terminal reward. However, as $\alpha$ increases (0.5–0.8), performance drops sharply to 28.5–25.5\%. Large shaping factors distort the reward landscape, causing the agent to overfit to auxiliary subgoal progress rather than true task completion. These findings highlight the necessity of calibrated shaping magnitudes.

\vspace{0.5cm}

\begin{table}[h]
\centering
\captionsetup{justification=centering}
\caption{Impact of replay buffer filtering based on perplexity (PPL) and rank score ($1/\mathrm{PPL}$).}
\label{tab:perplexity_impact}
\begin{tabular}{lcccc}
\toprule
\textbf{Rank Score ($1/\mathrm{PPL}$)} & $(0.9, 1.0]$ & $\mathbf{[0.5, 0.9]}$ & $[0.0, 0.5)$ & $[0.0, 1.0]$ \\
\midrule
\textbf{PPL Range} & $[1, \tfrac{1}{0.9})$ & $\mathbf{[\tfrac{1}{0.9}, \tfrac{1}{0.5}]}$ & $(\tfrac{1}{0.5}, \infty)$ & Full Range \\
\midrule
SR (\%)   & 27.9 & \textbf{36.4} & 23.6 & 29.1 \\
\bottomrule
\end{tabular}
\end{table}

\noindent \textbf{The impact of perplexity filtering.}
Table~\ref{tab:perplexity_impact} evaluates how different perplexity bands affect data usefulness. Training exclusively on low-perplexity (highly predictable) samples yields only 27.9\% success, suggesting limited learning value from ``too easy'' trajectories. Conversely, using only the high-perplexity tail produces the worst performance (23.6\%), indicating that noisy or out-of-distribution states destabilize the value critic.

The best performance arises from filtering to the moderate-perplexity range $[1/0.9,1/0.5]$ (i.e., rank score $[0.5,0.9]$), achieving \textbf{36.4\%}. These ``borderline-difficult'' transitions provide the most informative gradients—neither trivial nor erratic—leading to more robust and sample-efficient learning.

\begin{algorithm}[h]
\caption{Training the Potential Critic $P_\psi$}
\label{alg:potential_critic_final}
\begin{algorithmic}[1]
\Require Dataset of trajectories $\mathcal{D}$; goal set $\mathcal{G}=\{g_1,\dots,g_K,g_{\mathrm{final}}\}$
\Ensure Trained potential critic $P_\psi(s,g)$

\State $\mathcal{S} \gets \emptyset$ \Comment{Supervision set for regression}

\For{each trajectory $\tau\in\mathcal{D}$}
    \If{trajectory does \emph{not} reach the final goal} 
        \State \textbf{continue} \Comment{Use only positive traces to ensure correlation with success}
    \EndIf

    \State Extract subgoal-completion vector $\mathbf{z}_t$ for all $t$
    \State Identify key steps $t_1,\dots,t_K$ where cumulative completion increases

    \State Compute continuous progress labels $p_t^\ast$ by linear interpolation:
    \[
    p_t^\ast = (1-\alpha_t)\tfrac{j}{K} + \alpha_t\tfrac{j+1}{K},
    \quad \alpha_t=\frac{t - t_j}{t_{j+1}-t_j}
    \]
    with the final segment anchored to the true trajectory end $T$.

    \For{each timestep $t$}
        \State Construct progress-aware state $s_t = (h_{1:t-1}, o_t, g)$
        \State Add $(s_t, g, p_t^\ast)$ to $\mathcal{S}$
    \EndFor
\EndFor

\Statex
\State \textbf{/* Supervised learning */}
\State Train the potential critic via MSE regression:
\[
\mathcal{L}_P(\psi)=\frac{1}{|\mathcal{S}|}\sum_{(s_t,g,p_t^\ast)\in\mathcal{S}}
\left(P_\psi(s_t,g)-p_t^\ast\right)^2.
\]

\State Return trained $P_\psi$.
\end{algorithmic}
\end{algorithm}

\subsection{Potential Critic Training via Supervised Progress Regression}
\label{apdx:potential_training}

A central challenge in long-horizon web tasks is that the environment provides only a sparse terminal reward—typically a single binary success or failure at the end of a multi-step trajectory. Such extreme sparsity makes it difficult for an RL agent to learn meaningful credit assignment, particularly when intermediate actions influence success in subtle, delayed ways. To mitigate this, MiRA introduces a \emph{potential critic} $P_\psi(s,g)$ that predicts a dense, continuous notion of task progress. Instead of relying on handcrafted shaping potentials, which are notoriously brittle in heterogeneous environments like WebArena, the potential critic is learned directly from data via supervised regression, yielding a shaping landscape that reflects the procedural structure of successful trajectories.

The construction of the regression targets is grounded in the SubGoal Checker, which emits a binary subgoal-completion vector $\mathbf{z}_t$ at each timestep. From this signal, we detect the key timesteps $t_1, t_2, \ldots, t_K$ at which cumulative progress increases. Between any pair of these key steps, we interpolate the progress value linearly so that the label $p_t^\ast$ increases smoothly from $j/K$ to $(j{+}1)/K$. This interpolation converts the discrete, staircase-like profile of subgoal completions into a continuous supervision signal defined over the entire trajectory. For the final segment, we intentionally anchor the interpolation interval to the true end of the trajectory rather than the moment the final subgoal is detected; this ensures that the learned potential continues to rise during late-stage ``cleanup'' actions such as verification, scrolling, and form submission, which contribute meaningfully to task resolution despite not triggering additional subgoal boundaries.

An important modeling decision is to fit the potential critic exclusively on \emph{positive} trajectories—those that ultimately reach the final goal. This ensures that the progress targets are always aligned with successful procedural structure. Using failed trajectories as supervision would dilute this relationship, as these episodes often terminate prematurely or contain exploratory detours that do not correspond to genuine forward progress. Restricting the regression dataset to successful episodes produces a potential function that implicitly captures the common semantic subsequences shared across successful attempts, creating a shaping landscape that rises precisely along paths known to lead to final success.

Once progress labels $p_t^\ast$ have been computed, the potential critic is trained to regress these values using a standard MSE objective:
\[
\mathcal{L}_P(\psi)
= \mathbb{E}_{(s_t,g,p_t^\ast)} \bigl[\|P_\psi(s_t,g) - p_t^\ast\|^2\bigr].
\]
The critic is implemented by augmenting a pretrained LLM with a regression head, enabling it to leverage both the semantic structure of the goal instruction and the action–observation history of the trajectory. Because the target labels are smooth and monotonic, optimization is stable, and the learned potential landscape provides a faithful surrogate of distance-to-go long before RL finetuning begins.

During reinforcement learning, the potential critic contributes an auxiliary shaping reward of the form
\[
r^{(\mathrm{aux})}_t
= \alpha \bigl[ P_\psi(s_{t+1},g) - P_\psi(s_t,g) \bigr],
\]
which is added to the environmental reward when computing advantage targets. This shaping term effectively supplies a local gradient that encourages the agent to continue moving ``forward'' in the learned progress space. Crucially, however, the primary value critic $V_\phi$ remains trained only on the true sparse terminal reward, ensuring the auxiliary signal does not distort the optimal policy. The potential critic thus serves as a non-intrusive guidance mechanism: it accelerates credit assignment and stabilizes learning across long horizons, while the ultimate optimization target remains governed solely by the environment's final success signal.

\noindent \textbf{Trajectory Selection and Quality Filtering.} To ensure the potential critic learns from robust and efficient behaviors, we apply strict quality controls to the gathered rollouts. First, we utilize an Outcome Reward Model (ORM) trained via WebRL~\citep{qi2024webrl} to verify the success of each trajectory collected from both the WebRL (Llama3-8b) and baseline RL agents (initialized from the same Gemma SFT checkpoint as MiRA). The task set remains identical to that used in the multi-phase running on the baseline methods. We then filter the verified successes to eliminate signs of suboptimal agency: we remove trajectories containing repetitive loops (defined as identical actions repeated more than 5 times) and exclude inefficient rollouts by retaining only those with a total length of less than 15 steps. This curation process yields a dataset of positive execution traces that represent more direct and logical paths to task completion, ensuring the learned potential landscape encourages efficiency rather than circuitous wandering. Then the potential critic will get continuely trained with main critic and actor.

In summary, supervised progress regression allows MiRA to construct a dense potential landscape grounded in the structure of successful trajectories. This landscape provides smooth intermediate rewards that fill the gaps left by sparse feedback, enabling efficient learning of long-horizon procedural behaviors that would otherwise be inaccessible to standard reinforcement learning.

\subsection{Derivation of the Optimal Policy in KL-Regularized RL}
\label{sec:apdx_RL_adv}

We start with the objective function for KL-regularized Reinforcement Learning (commonly used in RLHF). The goal is to maximize the expected reward while minimizing the KL-divergence from a reference policy $\pi_{\text{ref}}$:

\begin{equation}
    \max_\pi \mathbb{E}_{\pi} \left[ r(s,a) - \beta \log \frac{\pi(a|s)}{\pi_{\text{ref}}(a|s)} \right]
\end{equation}

The closed-form solution for the optimal policy $\pi^*$ is given by the Boltzmann distribution:

\begin{equation}
    \pi^*(a|s) = \frac{1}{Z(s)} \pi_{\text{ref}}(a|s) \exp\left( \frac{Q^*(s,a)}{\beta} \right)
\end{equation}

where $Z(s)$ is the partition function (normalization constant). In the maximum entropy framework, the optimal value function $V^*(s)$ relates to the partition function as follows:

\begin{equation}
    V^*(s) = \beta \log Z(s) \implies Z(s) = \exp\left( \frac{V^*(s)}{\beta} \right)
\end{equation}

Recall the definition of the Advantage function: $A^*(s,a) = Q^*(s,a) - V^*(s)$. We can substitute $Q^*(s,a) = A^*(s,a) + V^*(s)$ back into the policy equation:

\begin{align*}
    \pi^*(a|s) &= \frac{1}{Z(s)} \pi_{\text{ref}}(a|s) \exp\left( \frac{A^*(s,a) + V^*(s)}{\beta} \right) \\
    &= \frac{1}{Z(s)} \pi_{\text{ref}}(a|s) \exp\left( \frac{A^*(s,a)}{\beta} \right) \exp\left( \frac{V^*(s)}{\beta} \right)
\end{align*}

Substituting $Z(s) = \exp\left( \frac{V^*(s)}{\beta} \right)$, the normalization terms cancel out:

\begin{align*}
    \pi^*(a|s) &= \frac{1}{\exp\left( \frac{V^*(s)}{\beta} \right)} \pi_{\text{ref}}(a|s) \exp\left( \frac{A^*(s,a)}{\beta} \right) \exp\left( \frac{V^*(s)}{\beta} \right) \\
    \pi^*(a|s) &= \pi_{\text{ref}}(a|s) \exp\left( \frac{1}{\beta} A^*(s,a) \right)
\end{align*}

\subsection{KL Divergence Formulation for policy update}
\label{sec:apdx_mse_vs_kl}

Many prior studies use KL divergence to measure the distance between two policies. When KL divergence is used to optimize $\pi_\theta$ toward $\pi^*$, the optimization goal becomes:
\begin{align}
& \arg\min_{\theta} \, \mathbb{E}_{s \sim d(s)} \left[ D_{\mathrm{KL}}\left( \pi^*(\cdot|s,I) \,\|\, \pi_\theta(\cdot|s,I) \right) \right] \nonumber \\[6pt]
&= \arg\max_{\theta} \, \mathbb{E}_{s \sim d(s)} \mathbb{E}_{a \sim \pi^*(a|s,I)} \left[ \log \pi_\theta(a|s,I) \right] \label{eq:kl_step1} \\[6pt]
&= \arg\max_{\theta} \, \mathbb{E}_{s \sim d(s)} \int_a \pi_{\mathrm{ref}}(a|s,I) \exp\!\left( \frac{1}{\beta} A^*(s,a,I) \right) \log \pi_\theta(a|s,I) \, \mathrm{d}a \label{eq:kl_step2} \\[6pt]
&= \arg\max_{\theta} \, \mathbb{E}_{s \sim d(s)} \mathbb{E}_{a \sim \pi_{\mathrm{ref}}(a|s,I)} \left[ \log \pi_\theta(a|s,I) \cdot \exp\!\left( \frac{1}{\beta} A^*(s,a,I) \right) \right], \label{eq:kl_final}
\end{align}
where $d(s)$ is a distribution over states, and we have substituted the closed-form optimal policy $\pi^*(a|s,I) = \pi_{\mathrm{ref}}(a|s,I) \exp\!\left( \frac{1}{\beta} A^*(s,a,I) \right)$ from Eq.~\ref{eq:optimal_policy_exp_advantage}.

\end{document}